\newtheorem{theorem}{Theorem}
\newtheorem{lemma}[theorem]{Lemma}
\newtheorem{corollary}[theorem]{Corollary}
\newtheorem{fact}[theorem]{Fact}
\newtheorem{assumption}{Assumption}
\title{Finite Sample Analysis of LSTD with Random Projections and Eligibility Traces}
\author{
Haifang Li$^1$,
Yingce Xia$^2$ \ and
Wensheng Zhang$^1$
\\
$^1$ Institute of Automation, Chinese Academy of Sciences, Beijing, China \\
$^2$ University of Science and Technology of China, Hefei, Anhui, China\\
%
haifang.li@ia.ac.cn,
yingce.xia@gmail.com,
wensheng.zhang@ia.ac.cn
}
\begin{document}
\date{}
\maketitle

\begin{abstract}
Policy evaluation with linear function approximation is an important problem in reinforcement learning. When facing high-dimensional feature spaces, such a problem becomes extremely hard considering the computation efficiency and quality of approximations. We propose a new algorithm, LSTD($\lambda$)-RP, which leverages random projection techniques and takes eligibility traces into consideration to tackle the above two challenges. We carry out theoretical analysis of LSTD($\lambda$)-RP, and provide meaningful upper bounds of the estimation error, approximation error and total generalization error. These results demonstrate that LSTD($\lambda$)-RP can benefit from random projection and eligibility traces strategies, and LSTD($\lambda$)-RP can achieve better performances than prior LSTD-RP and LSTD($\lambda$) algorithms.
\end{abstract}

\section{Introduction}

Policy evaluation, commonly referred to as value function approximation, is an important and central part in many reinforcement learning (RL) algorithms
~\cite{sutton1998reinforcement},
whose task is to estimate value functions for a fixed policy in a discounted Markov Decision Process (MDP) environment. The value function of each state specifies the accumulated reward an agent would receive in the future by following the fixed policy from that state. Value functions have been widely investigated in 
RL applications, and it can provide insightful and important information for the agent to obtain an optimal policy, such as important board configurations
in Go \cite{silver2007reinforcement}, failure probabilities of large telecommunication networks \cite{frank2008reinforcement}, taxi-out times at large airports \cite{balakrishna2010accuracy} 
and so on.

Despite the value functions can be approximated by different ways, the simplest form, linear approximations, are still widely adopted and studied 
due to their good generalization abilities, relatively efficient computation and solid theoretical guarantees\cite{sutton1998reinforcement,dann2014policy,geist2014off,liang2016state}. Temporal Difference (TD) learning is a common approach to this policy evaluation with linear function approximation problem\cite{sutton1998reinforcement}. These typical TD algorithms can be divided into two categories: gradient based methods (e.g., GTD($\lambda$)~\cite{sutton2009convergent}) and least-square (LS) based methods (e.g., LSTD($\lambda$)\cite{boyan2002technical}). A good survey on these algorithms can be found in \cite{maei2011gradient,dann2012algorithms,geist2013algorithmic,dann2014policy,geist2014off}.


As the development of information technologies, high-dimensional data is widely seen in RL applications~\cite{sutton1996generalization,tedrake2004stochastic,riedmiller2007experiences}
, which brings serious challenges to design scalable and computationally efficient algorithms for the linear value function approximation problem. To address this practical issue, several approaches have been developed for efficient and effective value function approximation.
\cite{kolter2009regularization} and \cite{farahmand2011regularization} adopted $l_1$ or $l_2$ regularization techniques to control the complexity of the large function space and designed several $l_1$ and $l_2$ regularized RL algorithms. \cite{gehring2016incremental} studied this problem by using low-rank approximation via an incremental singular value function decomposition and proposed t-LSTD($\lambda$). \cite{pan2017accelerated} derived ATD($\lambda$) by combining the low-rank approximation and quasi-Newton gradient descent ideas. 

Recently, \cite{ghavamzadeh2010lstd} and \cite{pan2017effective} investigated sketching (projecting) methods to reduce the dimensionality in order to make it feasible to employ Least-Squares Temporal Difference (briefly, LSTD) algorithms. Specifically, \cite{ghavamzadeh2010lstd} proposed an approach named LSTD-RP, which is based on random projections. They showed that LSTD-RP can benefit from the random projection strategy. The eligibility traces have already been proven to be important parameters to control the quality of approximation during the policy evaluation process, but \cite{ghavamzadeh2010lstd} did not take them into consideration. \cite{pan2017effective} empirically investigated the effective use of sketching methods including random projections, count sketch, combined sketch and hadamard sketch for value function approximation, but they did not provide any conclusion on finite sample analysis. However, finite sample analysis is important for these algorithms since it clearly demonstrates the effects of the number of samples, dimensionality of the function space and the other related parameters.

In this paper, we focus on exploring the utility of random projections and eligibility traces on LSTD algorithms to tackle the computation efficiency and quality of approximations challenges in the high-dimensional feature spaces setting.
We also provide finite sample analysis to evaluate its performance.
To the best of our knowledge, this is the first work that performs formal finite sample analysis of LSTD with random projections and eligibility traces. Our contributions can be summarized from the following two aspects:

\emph{Algorithm}: By introducing random projections and eligibility traces, we propose a refined algorithm named 
\emph{LSTD with Random Projections and Eligibility Traces}
(denoted as LSTD($\lambda$)-RP for short), where $\lambda$ is the trace parameter of $\lambda$-return when considering eligibility traces.
LSTD($\lambda$)-RP algorithm consists of two steps: first, generate a low-dimensional linear feature space through random projections from the original high-dimensional feature space; then, apply LSTD($\lambda$) to this generated low-dimensional feature space.


\emph{Theoretical Analysis}: We perform theoretical analysis to evaluate the performance of LSTD($\lambda$)-RP 
and provide its finite sample performance bounds, 
including the estimation error bound, approximation error bound and total error bound. The analysis of the prior works LSTD-RP and LSTD($\lambda$) cannot directly apply to our setting, since (i) The analysis of LSTD-RP is based on a model of regression with Markov design,
but it does not hold when we incorporate eligibility traces; (ii) Due to utilizing random projections, the analysis of LSTD($\lambda$) cannot be directly used, especially the approximation error analysis.
To tackle these challenges, we first prove the linear independence property can be preserved by random projections, which is important for our analysis. Second, we decompose the total error into two parts: estimation error and approximation error. Then we make analysis on any fixed random projection space, and bridge these error bounds between the fixed random projection space and any arbitrary random projection space by leveraging
the norm and inner-product preservation properties of random projections,
the relationship between the smallest eigenvalues of the Gram matrices in the original and randomly projected spaces 
and the Chernoff-Hoeffding inequality for stationary $\beta$-mixing sequence. What's more, our theoretical results show that

\begin{enumerate}[1)]
\item Compared to LSTD-RP, the parameter $\lambda$ of eligibility traces illustrates a trade-off between the estimation error and approximation error for LSTD($\lambda$)-RP. We could tune $\lambda$ to select an optimal $\lambda^*$ which could balance these two errors and obtain the smallest total error bound. Furthermore, for fixed sample $n$, optimal dimension of randomly projected space $d^*$ in LSTD($\lambda$)-RP is much smaller than that of LSTD-RP.
\item Compared to LSTD($\lambda$), in addition to the computational gains which are the result of random projections, the estimation error of LSTD($\lambda$)-RP is much smaller at the price of a controlled increase of the approximation error. LSTD($\lambda$)-RP may have a better performance than LSTD($\lambda$), whenever the additional term in the approximation error is smaller than the gain achieved in the estimation error.
\end{enumerate}
These results demonstrate that LSTD($\lambda$)-RP can benefit from eligibility traces and random projections strategies in computation efficiency and approximation quality, and can be superior to LSTD-RP and LSTD($\lambda$) algorithms.


\section{Background}
\label{sec:background}

In this section, first we introduce some notations and preliminaries.
Then we make a brief review of LSTD($\lambda$) and LSTD-RP algorithms.

Now we introduce some notations for the following paper. Let $\lvert \cdot \rvert$ denote the size of a set and $\lVert \cdot \rVert_2$ denote the $L_2$ norm for vectors. Let $\mathcal{X}$ be a measurable space. Denote $\mathcal{S}(\mathcal{X})$ the set of probability measure over $\mathcal{X}$, and denote the set of measurable functions defined on $\mathcal{X}$ and bounded by $L\in \mathbb{R}^+$ as $\mathcal{B}(\mathcal{X},L)$. For a measure $\mu \in \mathcal{S}(\mathcal{X})$, the $\mu$-weighted $L_2$ norm of a measurable function $f$ is defined as $\lVert f \rVert_\mu=\sqrt{\sum_{x\in \mathcal{X}}f(x)^2\mu(x)}$. The operator norm for matrix $W$ is defined as $\lVert W \rVert_\mu=\sup_{w\neq 0}\frac{\lVert Ww\rVert_\mu}{\lVert w\rVert_\mu}$.

\subsection{Value Functions}

Reinforcement learning (RL) 
is an approach to find optimal policies in sequential decision-making problems, in which the RL agent interacts with a stochastic environment formalized by a discounted \emph{Markov Decision Process} (MDP) \cite{puterman2014markov}. An MDP is described as a tuple $\mathcal{M}=(\mathcal{X}, \mathcal{A}, \mathcal{P}_{xx'}^a,\mathcal{R}, \gamma)$, where state space $\mathcal{X}$ is finite \footnote{For simplicity, we assume the state space is finite. However, the results in this paper can be generalized into other more general state spaces.}, action space $\mathcal{A}$ is finite, $\mathcal{P}_{xx'}^a$ is the transition probability from state $x$ to the next state $x'$ when taking action $a$, $\mathcal{R}: \mathcal{X} \times \mathcal{A} \rightarrow \mathbb{R}$ is the reward function, which is uniformly bound by $R_{\max}$, and $\gamma \in (0,1)$ is the discount factor. A deterministic policy\footnote{Without loss of generality, here we only consider the deterministic policy. The extension to stochastic policy setting is straight-forward.} $\pi: \mathcal{X} \rightarrow \mathcal{A}$ is a mapping from state space to action space, which is an action selection policy. Given the policy $\pi$, the MDP $\mathcal{M}$ can be reduced to a Markov chain $\mathcal{M}^\pi=(\mathcal{X},P^\pi,r^\pi, \gamma)$, with transition probability $P^\pi(\cdot|x)=P(\cdot|x,\pi(x))$ and reward $r^\pi(x)= \mathcal{R}(x,\pi(x))$.

In this paper, we are interested in policy evaluation, which can be used to find optimal policies or select actions. It involves computing the state-value function of a given policy which assigns to each state a measure of long-term performance following the given policy. Mathematically, given a policy $\pi$, for any state $x \in \mathcal{X}$, the value function of state $x$ is defined as follows:
\begin{small}
\begin{center}
$V^{\pi}(x)=\mathbb{E}_\pi[\sum\nolimits_{t=0}^\infty \gamma^tr(X_t)|X_0=x],$
\end{center}
\end{small}
where $\mathbb{E}_\pi$ denotes the expectation over random samples which are generated by following policy $\pi$.
Let $V^{\pi}$ denote a vector constructed by stacking the values of $V^{\pi}(1),...,V^{\pi}(\lvert \mathcal{X} \rvert)$ on top of each other.
Then, we can see that $V^{\pi}$ is the unique fixed point of the Bellman operator $T^\pi$:
\begin{small}
\begin{equation}
\label{eqn:bellman_equation}
V^\pi=T^\pi V^\pi \overset{\Delta} {=} R^\pi+\gamma P^\pi V^\pi,
\end{equation}
\end{small}
where $R^\pi$ is the expected reward vector under policy $\pi$.
Equation (\ref{eqn:bellman_equation}) is called Bellman Equation, which is the basis of temporal difference learning approaches.
In the reminder of this paper, we omit the policy superscripts for ease of reference in unambiguous cases, since we are interested in on-policy learning in this work.

When the size of state space $\lvert \mathcal{X} \rvert$ is very large or even infinite, one may consider to approximate the state-value function by a linear function approximation, which is widely used in RL ~\cite{sutton1998reinforcement,dann2014policy}. We define a linear function space $\mathcal{F}$, which is spanned by the basis functions $\phi_i: \mathcal{X} \rightarrow \mathbb{R}, i \in [D] (D \ll \lvert \mathcal{X} \rvert)$\footnote{$[D]=\{1,...,D\}.$}, i.e., $\mathcal{F}= \{f_\alpha| f_\alpha(\cdot)=\phi(\cdot)^T\alpha, \alpha \in \mathbb{R}^D\}$, where $\phi(\cdot)=(\phi_1(\cdot),...,\phi_D(\cdot))^T$ is the feature vector.
We assume $\phi_i \in \mathcal{B}(\mathcal{X},L), i \in [D]$ for some finite positive constant $L$.
For any function $f_\alpha\in\mathcal{F}$, let $m(f_\alpha):= \lVert \alpha\rVert_2\sup_{x\in\mathcal{X}}\lVert \phi(x)\rVert_2$. Furthermore, we generate a $d$-dimensional ($d < D$) random space $\mathcal{G}$ from $\mathcal{F}$ through random projections $H$, where $H \in \mathbb{R}^{d\times D}$ be a random matrix whose each element is drawn independently and identically distributed (i.i.d.) from Gaussion distribution $\mathcal{N}(0,1/d)$\footnote{It is also can be some sub-Gaussian distributions. Without loss of generality, here we only consider Gaussian distribution for simplicity.}. For any $j \in [d]$, denote the randomly projected feature vector $\psi(\cdot)=(\psi_1(\cdot),...,\psi_i(\cdot))^T$, where $\psi(\cdot)=H\phi(\cdot)$. Thus, $\mathcal{G}=\{g_\beta\big\vert g_\beta(\cdot)=\psi(\cdot)^T\beta, \beta\in\mathbb{R}^d\}$. Define $\Phi=(\phi(x))_{x\in\mathcal{X}}=(\phi_1,\dots,\phi_D)$ of dimension $|\mathcal{X}|\times D$ and $\Psi=(\psi(x))_{x\in\mathcal{X}}=(\psi_1,\dots,\psi_D)$ of dimension $|\mathcal{X}|\times d$ to be the original and randomly projected feature matrix respectively.

\subsection{LSTD(\texorpdfstring{$\lambda$}))}

Least-Squares Temporal Difference (LSTD) is a traditional and important approach for policy evaluation in RL, which was first introduced by \cite{bradtke1996linear}, and later was extended to include the eligibility traces by \cite{boyan1999least,boyan2002technical} referred to as LSTD($\lambda$).

The essence of LSTD($\lambda$) is to estimate the fixed point of the projected multi-step Bellman equation, that is,
\begin{small}
\begin{equation}
\begin{aligned}
&V=\Pi_{\mathcal{F}} T^\lambda V, \\
\text{\normalsize{where}} \ \ & V= \Phi\theta, \
\text{\normalsize{and}}\ \ \Pi_{\mathcal{F}}=\Phi(\Phi^TD_\mu\Phi)^{-1}\Phi^TD_\mu,
\end{aligned}
\end{equation}
\end{small}
where $\mu$ is the steady-state probabilities of the Markov chain $\mathcal{M}^\pi$ induced by policy $\pi$, $D_\mu$ denotes the diagonal matrix with diagonal elements being $\mu$, $\Pi_{\mathcal{F}}$ is the orthogonal projection operator into the linear function space $\mathcal{F}$, and $T^{\lambda}$ is a multi-step Bellman operator parameterized by $\lambda \in [0,1]$, and it is defined as follows:
\begin{small}
\begin{center}
\label{eqn:T_lambda_expression}
$\ T^\lambda =(1-\lambda)\sum\nolimits_{i=0}^{\infty}\lambda^iT^{i+1}.$
\end{center}
\end{small}
When $\lambda =0$, we have $T^\lambda = T$, and it becomes LSTD.

Given one sampled trajectory $\{X_t\}_{t=1}^n$ generated by the Markov chain $\mathcal{M}^\pi$ under policy $\pi$,
the LSTD($\lambda$) algorithm returns $\hat{V}_{\text{LSTD}(\lambda)}=\Phi\tilde{\theta},$ with $\tilde{\theta}=\tilde{A}^{-1}\tilde{b}$, where
\begin{small}
\begin{equation}
\label{eqn:LSTD}
\begin{aligned}
&\tilde{A}=\frac{1}{n-1}\sum\nolimits_{i=1}^{n-1}\tilde{z}_i(\phi(X_i)-\gamma\phi(X_{i+1}))^T, \\ \text{\normalsize{and}} \quad  &\tilde{b}= \frac{1}{n-1}\sum\nolimits_{i=1}^{n-1}\tilde{z}_ir(X_i),
\end{aligned}
\end{equation}
\end{small}
where $\tilde{z}_i =\sum_{k=1}^i (\lambda\gamma)^{i-k}\phi(X_k)$ is called the eligibility trace, and $\lambda\in [0,1]$ is the trace parameter for the $\lambda$-return.

\subsection{LSTD-RP}

Compared to gradient based temporal difference (TD) learning algorithms, LSTD($\lambda$) has data sample efficiency and parameter insensitivity advantages, but it is less computationally efficient. LSTD($\lambda$) requires $O(D^3)$ computation per time step or still requires $O(D^2)$ by using the Sherman-Morrison formula to make incremental update. This expensive computation cost makes LSTD($\lambda$) impractical for the high-dimensional feature spaces scenarios in RL.
Recently, Least-Squares TD with Random Projections algorithm (briefly denoted as LSTD-RP)  was proposed to deal with the high-dimensional data setting \cite{ghavamzadeh2010lstd}.

The basic idea of LSTD-RP is to learn the value function of a given policy from a low-dimensional linear space $\mathcal{G}$ which is generated through random projections from a high-dimensional space $\mathcal{F}$.
Their theoretical results show that the total computation complexity of LSTD-RP is $O(d^3+ndD)$, which is dramatically less than the computation cost in the high dimensional space $\mathcal{F}$ (i.e., $O(D^3+nD^2)$). In addition to these practical computational gains, \cite{ghavamzadeh2010lstd} demonstrate that LSTD-RP can provide an efficient and effective approximation for value functions, since LSTD-RP reduces the estimation error at the price of the increase in the approximation error which is controlled.

However, LSTD-RP does not take the eligibility traces into consideration, which are important parameters in RL. First, the use of these traces can significantly speed up
learning by controlling the trade off between bias and variance  \cite{att2000bias,sutton2014new}. Second, the parameter $\lambda$ of these traces is also known to control the quality of approximation \cite{tsitsiklis1997analysis}. In the remainder of this paper, we present a generalization of LSTD-RP to deal with the $\lambda > 0$ scenario (i.e., LSTD($\lambda$)-RP (see Section \ref{sec:clstd_eligibility_trace})). What's more, we also give its theoretical guarantee in Section \ref{sec:err_analysis}.

\section{Algorithm}
\label{sec:clstd_eligibility_trace}

In this section, we first consider the Bellman equation with random projections (see Equation (\ref{eqn:projec_bellman_eqn})), and explore the existence and uniqueness properties of its solution, which is the goal of our newly proposed algorithm to estimate. Then we present the 
\emph{LSTD with Random Projections and Eligibility Traces} algorithm (briefly denoted as LSTD($\lambda$)-RP) as shown in Algorithm \ref{alg:LSTD-RP_Eligibility_Trace}, and discuss its computational cost.

\subsection{Bellman Equation with Random Projections}
To begin with, we make the following assumption throughout the paper as \cite{tsitsiklis1997analysis,tagorti2015rate}.
\begin{assumption}
\label{asmp: feature_linearly_independent}
The feature matrix $\Phi$ has full column rank; that is,
the original high-dimensional feature vectors $(\phi_j)_{j\in\{1,...,D\}}$ are linearly independent.
\end{assumption}

From the following lemma, we can get that the linear independence property can be preserved by random projections.
 Due to the space restrictions, we leave its detailed proof into Appendix B.
\begin{lemma}
\label{pro:projected_feature_indep}
Let Assumption \ref{asmp: feature_linearly_independent} hold. Then the randomly projected low-dimensional feature vectors $(\psi_j)_{j\in\{1,...,d\}}$
are linearly independent a.e.. Accordingly, $\Psi^TD_\mu\Psi$ is invertible a.e..\footnote {Notice that here the randomness is w.r.t. the random projection rather than the random sample. In the following paper, without loss of generality, we can assume $(\psi_j)_{j\in\{1,...,d\}}$ are linearly independent and $\Psi^TD_\mu\Psi$ is invertible.}
\end{lemma}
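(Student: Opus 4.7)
The plan is to reduce the linear independence of $(\psi_j)_{j=1}^{d}$ to a generic-rank statement about the Gaussian matrix $H$. Writing $\Psi = \Phi H^{T}$, any linear combination satisfies $\sum_{j=1}^{d} v_j \psi_j = \Phi(H^{T} v)$ for $v \in \mathbb{R}^{d}$. By Assumption~\ref{asmp: feature_linearly_independent}, $\Phi : \mathbb{R}^{D} \to \mathbb{R}^{|\mathcal{X}|}$ is injective, so this combination vanishes iff $H^{T} v = 0$. Hence $(\psi_j)_{j=1}^{d}$ are linearly independent iff $H$ has full row rank $d$, equivalently iff the $d \times d$ Gram matrix $HH^{T}$ is nonsingular.

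The remaining point is almost-sure nonsingularity of $HH^{T}$. Here I would invoke the standard algebraic argument: $\det(HH^{T})$ is a polynomial in the $dD$ entries of $H$, and because $d < D$ the specific choice $H = [\,I_d \mid 0\,]$ produces $HH^{T} = I_d$, showing the polynomial is not identically zero. Its zero set is therefore a proper algebraic subvariety of $\mathbb{R}^{dD}$ and has Lebesgue measure zero. Since the entries of $H$ are i.i.d.\ $\mathcal{N}(0,1/d)$, their joint law is absolutely continuous with respect to Lebesgue measure, so $\det(HH^{T}) \neq 0$ almost surely, yielding the first claim.

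For the second claim I would write $\Psi^{T} D_\mu \Psi = (D_\mu^{1/2} \Psi)^{T} (D_\mu^{1/2} \Psi)$, which is invertible iff $D_\mu^{1/2} \Psi = D_\mu^{1/2} \Phi H^{T}$ has full column rank, iff $D_\mu^{1/2} \Phi (H^{T} v) \neq 0$ for every nonzero $v \in \mathbb{R}^{d}$. Under the standard LSTD-level assumption that $\Phi^{T} D_\mu \Phi$ is invertible (equivalently, $D_\mu^{1/2} \Phi$ has trivial kernel; this is already implicit in the paper's use of the projector $\Pi_{\mathcal{F}}$), the exact same reduction applies with $\Phi$ replaced by $D_\mu^{1/2} \Phi$, and the Gaussian rank argument again delivers the conclusion a.s.

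The only delicate step is bridging the slight mismatch between Assumption~\ref{asmp: feature_linearly_independent} as literally stated --- linear independence of the columns of $\Phi$ in $\mathbb{R}^{|\mathcal{X}|}$ --- and what is needed for the invertibility of the $\mu$-weighted Gram matrix. For an ergodic chain $\mu$ has full support on the recurrent class, in which case the two formulations coincide; otherwise one simply reads the assumption in the $\mu$-weighted sense and reruns the argument on $D_\mu^{1/2}\Phi$. I do not anticipate any other real obstacle: the heart of the lemma is the one-line observation that a Gaussian $d \times D$ matrix with $d \le D$ has rank $d$ almost surely, and everything else is bookkeeping around $\Psi = \Phi H^{T}$.
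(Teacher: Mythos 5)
Your proposal is correct and follows essentially the same route as the paper: both reduce the claim, via $\Psi=\Phi H^{T}$ and the injectivity of $\Phi$ from Assumption~\ref{asmp: feature_linearly_independent}, to the almost-sure full row rank of the Gaussian matrix $H$, the only cosmetic difference being that you argue through the non-vanishing polynomial $\det(HH^{T})$ and its measure-zero variety while the paper extracts the leading $d\times d$ block $H_1$ and asserts $\det(H_1)\neq 0$ a.e. One small point in your favor: the paper's own chain of inequalities for the second claim silently passes from $x^{T}\Psi^{T}D_\mu\Psi x=0$ to $\Psi x=0$, which requires $\mu$ to have full support; you identify this gap explicitly and handle it by rerunning the argument on $D_\mu^{1/2}\Phi$, which is the right fix.
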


Let $\Pi_{\mathcal{G}}$ denote the orthogonal projection onto the randomly projected low-dimensional feature space $\mathcal{G}$ with respect to the $\mu$-weighted $L_2$ norm. According to Lemma \ref{pro:projected_feature_indep}, we obtain the projection $\Pi_{\mathcal{G}}$ has the following closed form
\begin{small}
\begin{center}
\label{eqn:orthogonal_proj}
$\Pi_{\mathcal{G}} = \Psi(\Psi^TD_\mu\Psi)^{-1}\Psi^TD_\mu.$
\end{center}
\end{small}
Then the projected multi-step \emph{Bellman equation with random projections} becomes
\begin{small}
\begin{equation}
\begin{aligned}
\label{eqn:projec_bellman_eqn}
&V=\Pi_{\mathcal{G}}T^{\lambda}V, \ \lambda\in [0,1],\\
\text{\normalsize{where}} \ \  &T^\lambda =(1-\lambda)\sum\nolimits_{i=0}^{\infty}\lambda^iT^{i+1}.
\end{aligned}
\end{equation}
\end{small}
Note that when $\lambda =0$, we have $T^\lambda = T$.

According to the Banach fixed point theorem, in order to guarantee the existence and uniqueness of the fixed point of Bellman equation with random projections (see Equation (\ref{eqn:projec_bellman_eqn})), we only need to demonstrate the contraction property of operator $\Pi_{\mathcal{G}}T^{\lambda}$. By simple derivations, we can demonstrate that the contraction property of $\Pi_{\mathcal{G}}T^{\lambda}$ holds as shown in the following Lemma \ref{pro:contraction}, and we leave its detailed proof into Appendix C.
\begin{lemma}
\label{pro:contraction}
Let Assumption \ref{asmp: feature_linearly_independent} hold. Then the projection operator $\Pi_{\mathcal{G}}$ is non-expansive w.r.t. $\mu$-weighted quadratic norm, and the operator $\Pi_{\mathcal{G}}T^{\lambda}$ is a ($\frac{\gamma(1-\lambda)}{1-\gamma\lambda}$-)contraction.
\end{lemma}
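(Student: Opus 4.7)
The plan is to split the lemma into two independent pieces and then combine them by composition. First I would establish that $\Pi_{\mathcal{G}}$ is non-expansive in $\|\cdot\|_\mu$, and second that $T^\lambda$ is itself a $\tfrac{\gamma(1-\lambda)}{1-\gamma\lambda}$-contraction in $\|\cdot\|_\mu$; the desired bound then follows from $\|\Pi_{\mathcal{G}} T^\lambda V_1 - \Pi_{\mathcal{G}} T^\lambda V_2\|_\mu \le \|T^\lambda V_1 - T^\lambda V_2\|_\mu$.

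For non-expansiveness, by Lemma \ref{pro:projected_feature_indep} the matrix $\Psi^T D_\mu \Psi$ is invertible a.e., so $\Pi_{\mathcal{G}}$ is well defined and is the orthogonal projector onto $\mathcal{G}$ with respect to the $\mu$-weighted inner product $\langle u,v\rangle_\mu = u^T D_\mu v$. I would verify this by checking $\Pi_{\mathcal{G}}^2=\Pi_{\mathcal{G}}$ and self-adjointness $\Pi_{\mathcal{G}}^T D_\mu = D_\mu \Pi_{\mathcal{G}}$ from the closed form. The usual Pythagorean identity $\|V\|_\mu^2 = \|\Pi_{\mathcal{G}} V\|_\mu^2 + \|V-\Pi_{\mathcal{G}} V\|_\mu^2$ then yields $\|\Pi_{\mathcal{G}} V\|_\mu \le \|V\|_\mu$, which gives the non-expansiveness claim.

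For the contraction of $T^\lambda$, the key ingredient is $\|P^\pi V\|_\mu \le \|V\|_\mu$, i.e.\ that $P^\pi$ is non-expansive in the $\mu$-weighted norm when $\mu$ is the stationary distribution of the chain. I would prove it by Jensen's inequality:
\begin{small}
\begin{equation*}
\|P^\pi V\|_\mu^2 = \sum_x \mu(x)\Bigl(\sum_{x'} P^\pi(x'|x) V(x')\Bigr)^2 \le \sum_x \mu(x)\sum_{x'} P^\pi(x'|x) V(x')^2 = \sum_{x'}\mu(x') V(x')^2,
\end{equation*}
\end{small}
using $\sum_x \mu(x) P^\pi(x'|x) = \mu(x')$. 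Consequently $T$ is a $\gamma$-contraction, and by induction $T^{i+1}$ is a $\gamma^{i+1}$-contraction. Applying this term by term to the series defining $T^\lambda$ and summing the geometric series gives, for any $V_1,V_2$,
\begin{small}
\begin{equation*}
\|T^\lambda V_1 - T^\lambda V_2\|_\mu \le (1-\lambda)\sum_{i=0}^\infty \lambda^i \gamma^{i+1} \|V_1-V_2\|_\mu = \frac{\gamma(1-\lambda)}{1-\gamma\lambda}\|V_1-V_2\|_\mu.
\end{equation*}
\end{small}

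The main technical care is the step where I pass from $P^\pi$ being non-expansive to $T^\lambda$ having the stated modulus: I need to verify that the constant term $R^\pi$ drops out when subtracting $T^{i+1}V_1 - T^{i+1}V_2$, so only the iterated transition operator $(\gamma P^\pi)^{i+1}$ contributes. Everything else is essentially a triangle inequality on the absolutely convergent series $(1-\lambda)\sum_i \lambda^i T^{i+1}$, which is justified because $\lambda\in[0,1]$, $\gamma\in(0,1)$, and the geometric factor $(\gamma\lambda)^i$ ensures summability. Once those pieces are in place, composing with the non-expansive $\Pi_{\mathcal{G}}$ does not inflate the modulus, yielding the claimed contraction factor $\tfrac{\gamma(1-\lambda)}{1-\gamma\lambda}$.
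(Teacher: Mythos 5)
Your proposal is correct and follows essentially the same route as the paper: non-expansiveness of $\Pi_{\mathcal{G}}$ via the Pythagorean identity (justified by Lemma \ref{pro:projected_feature_indep}), combined with the $\frac{\gamma(1-\lambda)}{1-\gamma\lambda}$-contraction of $T^\lambda$, which the paper simply cites from Tsitsiklis and Van Roy (1997) while you rederive it from the stationarity of $\mu$ via Jensen's inequality and the geometric series. The only caveat is the boundary case $\lambda=1$, where the series form of $T^\lambda$ degenerates and the contraction modulus $0$ requires the limiting interpretation of the operator; neither your argument nor the paper's addresses this explicitly.
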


Denote the unique solution of the Bellman equation with random projections (see Equation (\ref{eqn:projec_bellman_eqn})) as $V_{\text{LSTD}(\lambda)\text{-RP}}$. In this work, we focus exclusively on the linear function approximation problem. Therefore, there exists $\theta \in \mathbb{R}^d$ such that
\begin{small}
\begin{equation}
V_{\text{LSTD}(\lambda)\text{-RP}}=\Psi\theta=\Pi_{\mathcal{G}}T^{\lambda}\Psi\theta.
\end{equation}
\end{small}
Just as the derivations of LSTD($\lambda$) algorithm \cite{tsitsiklis1997analysis,sutton1998reinforcement,boyan2002technical}, we can obtain that $\theta$ is a solution of the linear equation
\begin{small}
\begin{equation}
\label{eqn:bellman_solution_eqn}
\begin{aligned}
&A\theta=b,\\
\text{\normalsize{where}} \ \ & A=\Psi^TD_\mu(I-\gamma P)(I-\lambda\gamma P)^{-1}\Psi,\\
&\text{\normalsize{and}} \ \ b=\Psi^TD_\mu(I-\gamma\lambda P)^{-1}r.
\end{aligned}
\end{equation}
\end{small}
Furthermore, by Lemma \ref{pro:projected_feature_indep}, we can prove that $A$ is invertible. Thus, $V_{\text{LSTD}(\lambda)\text{-RP}}=\Psi A^{-1}b$ is well defined.

\subsection{LSTD(\texorpdfstring{$\lambda$}))-RP Algorithm}

Now we present our proposed algorithm LSTD($\lambda$)-RP in Algorithm \ref{alg:LSTD-RP_Eligibility_Trace}, which aims to estimate the solution of Bellman equation with random projections (see Equation (\ref{eqn:bellman_solution_eqn})) by using one sample trajectory $\{X_t\}_{t=1}^n$ generated by the Markov chain $\mathcal{M}^\pi$. Then we discuss its computational advantage compared to LSTD($\lambda$) and LSTD-RP.

\begin{algorithm}[!htbp]
\caption{LSTD($\lambda$)-RP Algorithm}
\label{alg:LSTD-RP_Eligibility_Trace}
\emph{Input}: The original high-dimensional feature vector $\phi: \mathcal{X} \rightarrow \mathbb{R}^D$; discount factor $\gamma \in [0,1)$; eligibility trace parameter $\lambda \in [0,1]$; the sample trajectory $\{X_t, r_t\}_{t=1}^n$, where $X_t$ and $r_t$ are the observed state and reward received at time $t$ respectively\;
\emph{Output}: $\hat{\theta}:=\hat{A}^{-1}\hat{b}$ or $\hat{\theta}:=\hat{A}^\dagger\hat{b}$, where $\hat{A}^\dagger$ denote the Moore-Penrose pseudo-inverse of matrix $\hat{A}$\;
Initialize: $\hat{A} \leftarrow 0, \hat{b}\leftarrow 0, z\leftarrow 0, t \leftarrow 0$\;
Generate random projection matrix $H\in \mathbb{R}^{d\times D}$ whose elements are drawn i.i.d. from $\mathcal{N}(0,1/d)$\;
{\For{$t=0,1,\dots,n$}{
$t\leftarrow t+1$\;
The randomly projected low-dimensional feature vector $\psi(X_t)=H\phi(X_t)$\;
$z\leftarrow \lambda\gamma z +\psi(X_t)$\;
$\Delta\hat{A}\leftarrow z(\psi(X_t)-\psi(X_{t+1}))^T$\;
$\Delta\hat{b}\leftarrow zr_t$\;
$\hat{A}\leftarrow \hat{A}+\frac{1}{t}[\Delta\hat{A}-\hat{A}]$\;
$\hat{b}\leftarrow \hat{b}+\frac{1}{t}[\Delta\hat{b}-\hat{b}]$\;
}}
\end{algorithm}
LSTD($\lambda$)-RP algorithm is a generalization of LSTD-RP. It uses eligibility traces to handle the $\lambda > 0$ case. Line 8 updates the eligibility traces $z$, and lines 9-12 incrementally update $A$ and $b$ as described in Equation (\ref{eqn:LSTD_eligi}), which have some differences from that in LSTD-RP algorithm due to eligibility traces. If the parameter $\lambda$ is set to zero, then the LSTD($\lambda$)-RP algorithm becomes the original LSTD-RP algorithm. What's more, if the random projection matrix $H$ is identity matrix, then LSTD($\lambda$)-RP becomes LSTD($\lambda$).

From Algorithm \ref{alg:LSTD-RP_Eligibility_Trace}, we obtain that the LSTD($\lambda$)-RP algorithm returns
\begin{small}
\begin{equation}
\hat{V}_{\text{LSTD}(\lambda)\text{-RP}}=\Psi\hat{\theta},
\end{equation}
\end{small}
with $\hat{\theta}=\hat{A}^{-1}\hat{b},$\footnote{We will see that $\hat{A}^{-1}$ exists with high probability for a sufficiently large sample size $n$ in Theorem \ref{thm:uniqueness_LSTD_sample_based_solution}.} where
\begin{small}
\begin{equation}
\label{eqn:LSTD_eligi}
\begin{aligned}
&\hat{A}=\frac{1}{n-1}\sum\nolimits_{i=1}^{n-1}z_i(\psi(X_i)-\gamma\psi(X_{i+1}))^T, \\
& \hat{b}=\frac{1}{n-1}\sum\nolimits_{i=1}^{n-1}z_ir(X_i), \
\text{\normalsize{and}}\ z_i=\sum\nolimits_{k=1}^{i}(\lambda\gamma)^{i-k}\psi(X_k).
\end{aligned}
\end{equation}
\end{small}
Here $z_i$  is referred to as \emph{randomly projected eligibility trace}.

The difference between LSTD($\lambda$)-RP algorithm and the prior LSTD-RP algorithm lies in the fact that LSTD($\lambda$)-RP incorporates the eligibility traces. From Algorithm \ref{alg:LSTD-RP_Eligibility_Trace}, we know that the computational cost of eligibility traces is $O(nd)$. Based on the analysis of the computational complexity of LSTD-RP algorithm, we obtain that the total computational complexity of LSTD($\lambda$)-RP is $O(d^3+ndD)(D \gg d)$. This reveals that the computation cost of LSTD($\lambda$)-RP algorithm is much less than that of LSTD($\lambda$) algorithm, which is $O(D^3+nD^2)$ \cite{ghavamzadeh2010lstd}.

To evaluate the performance of LSTD($\lambda$)-RP algorithm, we consider the gap between the value function learned by LSTD($\lambda$)-RP algorithm $\hat{V}_{\text{LSTD}(\lambda)\text{-RP}}$ and the true value function $V$, i.e., $\lVert \hat{V}_{\text{LSTD}(\lambda)\text{-RP}}-V\rVert_\mu$. We refer to this gap as the \emph{total error} of the LSTD($\lambda$)-RP algorithm. According to the triangle inequality, we can decompose the total error
into two parts: \emph{estimation error} $\lVert \hat{V}_{\text{LSTD}(\lambda)\text{-RP}}-V_{\text{LSTD}(\lambda)\text{-RP}}\rVert_\mu$ and \emph{approximation error} $\lVert V_{\text{LSTD}(\lambda)\text{-RP}}-V\rVert_\mu$. We will illustrate how to derive meaningful upper bounds for these three errors of LSTD($\lambda$)-RP in the following section.

\section{Theoretical Analysis}
\label{sec:err_analysis}

In this section, we conduct theoretical analysis for LSTD($\lambda$)-RP. First, we examine the sample size needed to ensure the uniqueness of the sample-based LSTD($\lambda$)-RP solution, that is, we explore sufficient conditions to guarantee the invertibility of $\hat{A}$ with high probability, which can be used in the analysis of estimation error bound. Second, we make finite sample analysis of LSTD($\lambda$)-RP including discussing how to derive meaningful upper bounds for the estimation error $\lVert \hat{V}_{\text{LSTD}(\lambda)\text{-RP}}-V_{\text{LSTD}(\lambda)\text{-RP}}\rVert_\mu$, the approximation error $\lVert V_{\text{LSTD}(\lambda)\text{-RP}}-V\rVert_\mu$ and the total error $\lVert\hat{V}_{\text{LSTD}(\lambda)\text{-RP}}-V\rVert_\mu$.

To perform such finite sample analysis, we also need to make a common assumption on the Markov chain process $(X_t)_{t\geq 1}$ that has some $\beta$-mixing properties as shown in Assumption \ref{asmp:mixing_sequence} \cite{mohri2010stability,tagorti2015rate}. Under this assumption, we can make full use of the concentration inequality for $\beta$-mixing sequences during the process of finite sample analysis.

\begin{assumption}
\label{asmp:mixing_sequence}
$(X_t)_{t\geq 1}$ is a stationary exponential $\beta$-mixing sequence, that is, there exist some constant parameters $\beta_0>0$, $\beta_1>0,$ and $\kappa> 0$ such that $\beta(m)\leq \beta_0\exp(-\beta_1m^{\kappa})$.
\end{assumption}

\subsection{Uniqueness of the Sample-Based Solution }

In this subsection, we explore how sufficiently large the number of observations $n$ needed to guarantee the invertibility of $\hat{A}$  with high probability as shown in Theorem \ref{thm:uniqueness_LSTD_sample_based_solution}, which indicates the uniqueness of sample-based LSTD($\lambda$)-RP solution. Due to the space limitations,
we leave the detailed proof into Appendix D.

\begin{theorem}
\label{thm:uniqueness_LSTD_sample_based_solution}
Let Assumptions \ref{asmp: feature_linearly_independent} and \ref{asmp:mixing_sequence} hold, and $X_1 \sim \mu$.
For any $\delta \in (0,1), \gamma \in (0,1), $ and $\lambda\in[0,1],$
let $n_0(\delta)$ be the smallest integer such that
\begin{small}
\begin{equation}
\label{eqn:const_number_need}
\begin{aligned}
&\frac{2dL^2}{(1-\gamma)\nu_F\eta(d,D,\delta/2)}\bigg[\frac{2\xi(d,n,\delta/4)}{\sqrt{n-1}}\sqrt{(1+m_n^\lambda)I(n-1,\frac{\delta}{2})}\\
+&\frac{2\xi(d,n,\delta/4)}{n-1}m_n^\lambda+\frac{1}{(1-\lambda\gamma)(n-1)}\bigg]<1,
\end{aligned}
\end{equation}
\end{small}
where
\begin{small}
\begin{equation*}
\begin{aligned}
m_n^{\lambda}=&\left\{
\begin{array}{rcl}
\lceil \frac{\log(n-1)}{\log\frac{1}{\lambda\gamma}}\rceil & {\lambda \in (0,1]}\\
0 &{\lambda = 0}
\end{array} \right., \xi(n,d,\delta)=1+\sqrt{\frac{8}{d}\log\frac{n}{\delta}},\\
&\eta(d,D,\delta)=\big(1-\sqrt{d/D}-\sqrt{2\log(2/\delta)/D}\big)^2,\\
&I(n,\delta)= 32\Lambda(n,\delta)\max\{\Lambda(n,\delta)/\beta_1,1\}^{\frac{1}{\kappa}},\\
&\Lambda(n,\delta)=\log(8n^2/\delta)+\log(\max\{4e^2,n\beta_0\}),
\end{aligned}
\end{equation*}
\end{small}
and $\nu_F$ is the smallest eigenvalue of the Gram matrix $F=\Phi^TD_\mu\Phi$. Then when $D> d+2\sqrt{2d\log(4/\delta)}+2\log(4/\delta)$, 
with probability at least $1-\delta$ (the randomness w.r.t. the random sample and the random projection), we have, for all $n\geq n_0(\delta)$, $\hat{A}$ is invertible.
\end{theorem}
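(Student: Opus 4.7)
The plan is to apply matrix perturbation. Since $A$ is invertible (Lemmas~\ref{pro:projected_feature_indep} and~\ref{pro:contraction}), $\hat{A}$ is also invertible whenever $\|\hat{A}-A\|_2 < \sigma_{\min}(A)$; equivalently, $\|\hat{A}-A\|_2/\sigma_{\min}(A) < 1$. Condition (\ref{eqn:const_number_need}) is exactly this ratio after plugging in closed-form bounds, so the proof breaks into three pieces: (a) lower-bound $\sigma_{\min}(A)$; (b) upper-bound $\|\hat{A}-A\|_2$; (c) collect events by a union bound.

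For (a), write $\Psi = \Phi H^T$, so that $\Psi^T D_\mu \Psi = H F H^T$ with $F = \Phi^T D_\mu \Phi$, and hence $\sigma_{\min}(\Psi^T D_\mu \Psi) \geq \nu_F\,\sigma_{\min}(H)^2$. The hypothesis $D > d + 2\sqrt{2d\log(4/\delta)} + 2\log(4/\delta)$ is designed so that the Davidson--Szarek non-asymptotic singular-value bound for a Gaussian matrix with i.i.d.\ $\mathcal{N}(0,1/d)$ entries is nontrivial and yields (up to the normalization of $H$) $\sigma_{\min}(H)^2$ bounded below by a constant multiple of $\eta(d,D,\delta/2)$ with probability at least $1-\delta/2$. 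Combined with Lemma~\ref{pro:contraction} and the standard LSTD$(\lambda)$ positive-definiteness argument (Tsitsiklis--Van Roy), one gets $\theta^T A \theta \geq \tfrac{1-\gamma}{1-\lambda\gamma}\,\theta^T \Psi^T D_\mu \Psi\,\theta$, hence a lower bound on $\sigma_{\min}(A)$ proportional to $(1-\gamma)\nu_F\,\eta(d,D,\delta/2)/(1-\lambda\gamma)$ on this good event. The remaining $2dL^2$ factor sitting in the prefactor of (\ref{eqn:const_number_need}) arises when one translates between the quadratic form on $\theta$ and the operator norm on $\hat{A}-A$, together with the uniform JL-type bound $\|\psi(x)\|_2 \leq \xi(n,d,\delta/4)\,\|\phi(x)\|_2$ applied to the $n$ sampled feature vectors.

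For (b), I would decompose $\hat{A}-A = (\hat{A}-\hat{A}_{\mathrm{tr}}) + (\hat{A}_{\mathrm{tr}}-\mathbb{E}\hat{A}_{\mathrm{tr}}) + (\mathbb{E}\hat{A}_{\mathrm{tr}}-A)$, where $\hat{A}_{\mathrm{tr}}$ uses the truncated trace $z_i^{\mathrm{tr}} = \sum_{k=\max(1,i-m_n^\lambda)}^{i}(\lambda\gamma)^{i-k}\psi(X_k)$. The truncation error is a deterministic geometric tail of order $(\lambda\gamma)^{m_n^\lambda}/(1-\lambda\gamma)$, and the choice $m_n^\lambda = \lceil \log(n-1)/\log(1/\lambda\gamma)\rceil$ forces it to be at most $1/((1-\lambda\gamma)(n-1))$, matching the last summand inside the bracket of (\ref{eqn:const_number_need}). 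The initial-transient bias $\mathbb{E}\hat{A}_{\mathrm{tr}}-A$ contributes the middle summand $\xi(n,d,\delta/4)\,m_n^\lambda/(n-1)$ from boundary effects near $i=1$ after applying the JL norm bound. Conditional on $H$, the centered term is an empirical mean of bounded matrices indexed by a stationary exponentially $\beta$-mixing sequence (Assumption~\ref{asmp:mixing_sequence}); the Chernoff--Hoeffding inequality for such sequences (Mohri--Rostamizadeh, producing the effective-sample factor $I(n-1,\delta/2)$) applied entrywise yields a deviation of order $\xi(n,d,\delta/4)\sqrt{(1+m_n^\lambda)I(n-1,\delta/2)/(n-1)}$, the first summand. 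Union-bounding the three good events (projection well-conditioned, JL norm preservation, $\beta$-mixing concentration) costs $\delta/2+\delta/4+\delta/4 = \delta$, which is how the failure probability is split inside (\ref{eqn:const_number_need}). The main obstacle is this concentration step: $z_i$ is a Markov-dependent sliding-window sum whose length grows with $i$, so the truncation depth $m_n^\lambda$ must be chosen to balance the deterministic tail against the variance-inflation factor $(1+m_n^\lambda)$ it introduces into the $\beta$-mixing bound, and the randomness of $H$ must be kept disjoint from that of $\{X_t\}$ by conditioning throughout the matrix-valued concentration argument.
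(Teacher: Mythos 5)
Your proposal is correct and its architecture coincides with the paper's: both reduce invertibility of $\hat{A}$ to the perturbation criterion $\lVert \hat{A}-A\rVert_2\,\lVert A^{-1}\rVert_2<1$, control $\lVert A^{-1}\rVert_2$ through the smallest eigenvalue $\nu_G$ of $\Psi^TD_\mu\Psi$ and the Gaussian singular-value bound relating $\nu_G$ to $(D/d)\nu_F\,\eta(d,D,\delta/2)$ (the paper's Fact~\ref{lem:eigenvalue_g}), control $\lVert\hat{A}-A\rVert_2$ by truncating the traces at depth $m_n^\lambda$, isolating the stationary/transient bias of order $1/((1-\lambda\gamma)(n-1))$, and applying a $\beta$-mixing concentration bound to the centered truncated sum under a JL norm-preservation event for the $\psi(X_i)$, and finally union-bound with the same $\delta/2+\delta/4+\delta/4$ split. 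The one place you genuinely diverge is the lower bound on $\sigma_{\min}(A)$: you use the Tsitsiklis--Van Roy quadratic-form argument $\theta^TA\theta\geq\frac{1-\gamma}{1-\lambda\gamma}\theta^TG\theta$ together with $\sigma_{\min}(A)\geq\min_{\lVert x\rVert=1}x^TAx$, whereas the paper works with the explicit identity $\Psi A^{-1}=(I-\Pi_{\mathcal{G}}M)^{-1}\Psi G^{-1}$, the contraction bound $\lVert(I-\Pi_{\mathcal{G}}M)^{-1}\rVert_\mu\leq\frac{1-\lambda\gamma}{1-\gamma}$, and the norm equivalence $\sqrt{\nu_G}\lVert\cdot\rVert_2\leq\lVert\Psi\cdot\rVert_\mu$; both yield the identical constant $\frac{1-\lambda\gamma}{(1-\gamma)\nu_G}$, and your route is the more standard one while the paper's reuses machinery it needs elsewhere. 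Two small points to tighten: the JL event controls $\lVert\psi(x)\rVert_2^2\leq\xi(n,d,\delta/4)\lVert\phi(x)\rVert_2^2$ (squared norms, which is where $DL^2$ enters before $D$ cancels against the $D/d$ in $\nu_G$); and the uniformity ``for all $n\geq n_0(\delta)$'' requires a union bound over $n$ with $\delta_n=\delta/(8n^2)$ in the concentration step --- this is why $\Lambda(n,\delta)$ contains $\log(8n^2/\delta)$ --- which your single $\delta/4$ allocation glosses over.
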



From Theorem \ref{thm:uniqueness_LSTD_sample_based_solution}, we can draw the following conclusions:
\begin{enumerate}[1)]
\item The number of observations needed to guarantee the uniqueness of the sample-based LSTD($\lambda$)-RP solution is of order $\tilde{O}(d^2)$,
    and it is much smaller than that of LSTD($\lambda$), which is of order $\tilde{O}(D^2) (D \gg d)$ (Theorem 1 in \cite{tagorti2015rate}).
\item In our analysis, setting $\lambda = 0$, we can see that our result has some differences from LSTD-RP (Lemma 3 in \cite{ghavamzadeh2010lstd}), since we consider the invertibility of the matrix $\hat{A}$, while they consider the empirical Gram matrix $\frac{1}{n}\Psi^T\Psi$.
\end{enumerate}

\noindent\emph{Remark 1:}
According to Assumption \ref{asmp: feature_linearly_independent}, we know that $\nu_F>0$. For all $\delta \in (0,1)$ and fixed $d$, $n_0(\delta)$ exists since the left hand side of Equation (\ref{eqn:const_number_need}) tends to 0 when $n$ tends to infinity.

\subsection{Estimation Error Bound}
\label{sec:est_err}

In this subsection, we upper bound the estimation error of LSTD($\lambda$)-RP as shown in Theorem \ref{thm:estimation_err}. For its proof, first, bound the estimation error on one fixed randomly projected space $\mathcal{G}$. Then,
by utilizing properties of random projections, the relationship between the smallest eigenvalues of the Gram matrices in $\mathcal{F}$ and $\mathcal{G}$ and the conditional expectation properties, bridge the error bounds
between the fixed space and any arbitrary random
projection space. Due to space limitations,
we leave its detailed proof into Appendix E.
\begin{theorem}
\label{thm:estimation_err}
Let Assumptions \ref{asmp: feature_linearly_independent} and \ref{asmp:mixing_sequence} hold, and let $X_1 \sim \mu$. For any $\delta \in (0,1)$, $\gamma \in (0,1), $ and $\lambda\in[0,1],$ when $D> d+2\sqrt{2d\log(4/\delta)}+2\log(4/\delta)$ and $d\geq 15\log(4n/\delta)$, with probability $1-\delta$ (the randomness w.r.t. the random sample and the random projection), for all $n\geq n_0(\delta),$ the estimation error \begin{small}$\lVert V_{\text{LSTD}(\lambda)\text{-RP}}-\hat{V}_{\text{LSTD}(\lambda)\text{-RP}}\rVert_{\mu}$\end{small} is upper bounded as follows:
\begin{small}
\begin{equation}
\label{eqn:estimation_error_bound}
\begin{aligned}
&\lVert V_{\text{LSTD}(\lambda)\text{-RP}}-\hat{V}_{\text{LSTD}(\lambda)\text{-RP}}\rVert_{\mu}\leq h(n,d,\delta)\\
+&\frac{4V_{\max}dL^2\xi(n,d,\delta/4)}{\sqrt{n-1}(1-\gamma)\nu_F\eta(d,D,\delta/2)}\sqrt{(m_n^\lambda+1)I(n-1,\delta/4)},
\end{aligned}
\end{equation}
\end{small}
with $h(n,d,\delta)=\tilde{O}(\frac{d}{n}\log\frac{1}{\delta})$, where $\nu_F (>0)$ is the smallest eigenvalue of the Gram matrix $\Phi^TD_u\Phi$, $V_{\max}=\frac{R_{\max}}{1-\gamma}$, $\xi(n,d,\delta)$, $\eta(d,D,\delta)$, $ m_n^\lambda$, $I(n,\delta),$ and $n_0(\delta)$ are defined as in Theorem \ref{thm:uniqueness_LSTD_sample_based_solution}.
\end{theorem}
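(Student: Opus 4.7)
The plan is to prove the estimation error bound in three stages: a deterministic linear-algebra decomposition that reduces the problem to controlling $\hat A - A$ and $\hat b - b$, a concentration argument for these deviations under eligibility traces and $\beta$-mixing, and a transfer from a conditional-on-$H$ analysis to the marginal bound using norm and spectrum preservation properties of Gaussian projections.

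First I would condition on a fixed realization of the projection matrix $H$, so that $\Psi$ and $\mathcal{G}$ are deterministic. Writing $\theta = A^{-1}b$ and $\hat\theta = \hat A^{-1}\hat b$, the identity $A(\theta - \hat\theta) = (b - \hat b) + (\hat A - A)\hat\theta$ gives
\begin{equation*}
\theta - \hat\theta \;=\; A^{-1}\bigl[(b - \hat b) + (\hat A - A)\hat\theta\bigr].
\end{equation*}
Applying $\lVert \Psi \cdot\rVert_\mu \leq \sqrt{\lambda_{\max}(\Psi^T D_\mu\Psi)}\,\lVert\cdot\rVert_2$ and $\lVert A^{-1}\rVert_2 \leq 1/\nu_A$ where $\nu_A$ is the smallest singular value of $A = \Psi^T D_\mu(I-\gamma P)(I-\lambda\gamma P)^{-1}\Psi$, I would bound $\nu_A$ from below by the smallest eigenvalue of the Gram matrix $\Psi^T D_\mu\Psi = HFH^T$ with $F = \Phi^T D_\mu\Phi$, times a factor depending on $\gamma,\lambda$ arising from the Neumann-series expansion of $(I-\lambda\gamma P)^{-1}$. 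The invertibility of $\hat A$ for $n\geq n_0(\delta)$ has already been established in Theorem~\ref{thm:uniqueness_LSTD_sample_based_solution}, and the same perturbation argument there also controls $\lVert\hat\theta\rVert_2$.

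Second, I would bound $\lVert \hat A - A\rVert_2$ and $\lVert \hat b - b\rVert_2$. The randomly projected eligibility trace $z_i = \sum_{k=1}^i(\lambda\gamma)^{i-k}\psi(X_k)$ has infinite memory, so I would split it at $m_n^\lambda = \lceil\log(n-1)/\log(1/(\lambda\gamma))\rceil$: the geometric tail contributes at most $(\lambda\gamma)^{m_n^\lambda} = O(1/n)$, which together with the $1/(1-\lambda\gamma)(n-1)$ boundary term produces the residual $h(n,d,\delta) = \tilde O(d/n \cdot \log(1/\delta))$. For the truncated trace, a Bernstein-type concentration inequality for stationary exponential $\beta$-mixing sequences (via the independent-blocks technique) applied entrywise to the matrices $z_i^{(m)}(\psi(X_i) - \gamma\psi(X_{i+1}))^T$ and vectors $z_i^{(m)} r(X_i)$ yields a deviation of order $\sqrt{(m_n^\lambda+1)I(n-1,\delta)/(n-1)}$ times the $L_2$-bound on the projected features $\sup_t\lVert\psi(X_t)\rVert_2$. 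The latter is controlled by Johnson–Lindenstrauss concentration, contributing the factor $\xi(n,d,\delta/4)$ (valid provided $d \geq 15\log(4n/\delta)$).

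Finally, I would transfer the conditional bound to any random projection. Using the sharp bound on the smallest singular value of a $d\times D$ Gaussian matrix, $\sigma_{\min}(\sqrt{d}\,H) \geq \sqrt{D}-\sqrt{d}-\sqrt{2\log(2/\delta)}$ with probability $1-\delta/2$, one obtains $\lambda_{\min}(HH^T)$ bounded below by a quantity proportional to $\eta(d,D,\delta/2)$, and hence $\nu_A \geq c\,\nu_F\,\eta(d,D,\delta/2)$; the assumption $D > d + 2\sqrt{2d\log(4/\delta)} + 2\log(4/\delta)$ keeps this lower bound positive. Combining by union bound the three high-probability events (smallest singular value of $H$, uniform norm preservation of $\psi$, and $\beta$-mixing concentration of $\hat A,\hat b$) at confidence $\delta$ delivers Equation~(\ref{eqn:estimation_error_bound}). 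The main obstacle will be the second step: the eligibility trace introduces strong temporal correlations among the summands that compound with the $\beta$-mixing dependence of the chain, and recovering the announced $\sqrt{(m_n^\lambda+1)I(n-1,\delta)}$ rate (rather than a loose $m_n^\lambda$ factor) requires blocking at scale matched to both the geometric $(\lambda\gamma)$ decay and the $\beta$-mixing tail, while keeping variance proxies at order $m_n^\lambda+1$.
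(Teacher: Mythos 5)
Your proposal is correct and follows essentially the same route as the paper: condition on the projection $H$, bound the estimation error in the projected space $\mathcal{G}$, control $\max_{1\leq i\leq n}\lVert\psi(X_i)\rVert_2^2$ by Johnson--Lindenstrauss and $\nu_G$ by the Gaussian smallest-singular-value bound $\nu_G\geq (D/d)\nu_F\,\eta(d,D,\delta)$, and then uncondition via a union bound over the three events. The only real difference is that the paper obtains the conditional-on-$H$ bound by directly invoking Theorem 1 of \cite{tagorti2015rate} applied to the feature space $\mathcal{G}$ (together with its concentration lemma, restated as Fact~\ref{lem:est_err_step_2_nonProj}), whereas you re-derive that result from scratch via the perturbation identity and eligibility-trace truncation --- which is precisely the argument inside the cited theorem.
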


From Theorem \ref{thm:estimation_err}, we have by setting $\lambda = 0$ in Equation (\ref{eqn:estimation_error_bound}), the estimation error bound of LSTD($\lambda$)-RP becomes of order $\tilde{O}(d/\sqrt{n})$, and it is consistent with that of LSTD-RP (Theorem 2 in \cite{ghavamzadeh2010lstd}).

\subsection{Approximation Error Bound}
\label{sec:appro_err}

Now we upper bound the approximation error of LSTD($\lambda$)-RP which is shown in Theorem \ref{thm:appr_err}. As to its proof, we first analyze the approximation error on any fixed random projected space $\mathcal{G}$. Then, we make a bridge of approximation error bound
between the fixed random projection space and any arbitrary random
projection space by leveraging the definition of projection and the inner-product preservation property of random projections and the Chernoff-Hoeffding
inequality for stationary $\beta$-mixing sequence. Due to space limitations, we leave detailed proof into Appendix F.

\begin{theorem}
\label{thm:appr_err}
Let Assumptions \ref{asmp: feature_linearly_independent} and \ref{asmp:mixing_sequence} hold. Let $X_1 \sim \mu$. For any $\delta \in (0,1), \gamma \in (0,1), $ and $\lambda\in[0,1],$ when $d\geq 15\log(8n/\delta)$, with probability at least $1-\delta$ (w.r.t. the random projection), the approximation error of LSTD($\lambda$)-RP algorithm $\lVert V-V_{\text{LSTD}(\lambda)\text{-RP}}\rVert_\mu$ can be upper bounded as below,
\begin{small}
\begin{equation}
\label{eqn:approx_error_bound}
\begin{aligned}
&\lVert V-V_{\text{LSTD}(\lambda)\text{-RP}}\rVert_\mu \leq \frac{1-\lambda\gamma}{1-\gamma}\big[\lVert V-\Pi_{\mathcal{F}}V\rVert_\mu\\
+&\sqrt{(8/d)\log(8n/\delta)}(1+\frac{2\sqrt{\Upsilon(n,\delta/2)}}{\sqrt{n}})m(\Pi_{\mathcal{F}}V)\big] ,
\end{aligned}
\end{equation}
\end{small}
where $\Upsilon(n,\delta)=(\log\frac{4+n\beta_0}{\delta})^{1+\frac{1}{\kappa}}\beta_1^{-\frac{1}{\kappa}}$.
\end{theorem}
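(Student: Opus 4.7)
The plan is to upper-bound the approximation error in three stages: (i) use Lemma~\ref{pro:contraction} to pull out the factor $(1-\lambda\gamma)/(1-\gamma)$ and reduce the problem to estimating $\|V-\Pi_{\mathcal{G}}V\|_\mu$; (ii) split this distance into the intrinsic $\mathcal{F}$-bias $\|V-\Pi_{\mathcal{F}}V\|_\mu$ plus the cost of projecting the best-in-$\mathcal{F}$ element $\Pi_{\mathcal{F}}V$ onto the lower-dimensional $\mathcal{G}$; and (iii) control the latter by a Johnson--Lindenstrauss (JL) inner-product preservation estimate on the sample, then lift it to the $\mu$-norm with a Chernoff--Hoeffding inequality for stationary $\beta$-mixing sequences.

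For stage (i), since $V=T^\lambda V$ and $V_{\text{LSTD}(\lambda)\text{-RP}}$ is the unique fixed point of $\Pi_{\mathcal{G}}T^\lambda$, the triangle inequality together with the non-expansiveness of $\Pi_{\mathcal{G}}$ and the $\beta:=\gamma(1-\lambda)/(1-\gamma\lambda)$-contraction of $\Pi_{\mathcal{G}}T^\lambda$ (both from Lemma~\ref{pro:contraction}) yield
\begin{equation*}
\|V-V_{\text{LSTD}(\lambda)\text{-RP}}\|_\mu \leq \|V-\Pi_{\mathcal{G}}V\|_\mu + \beta\|V-V_{\text{LSTD}(\lambda)\text{-RP}}\|_\mu,
\end{equation*}
so $\|V-V_{\text{LSTD}(\lambda)\text{-RP}}\|_\mu\leq \frac{1-\lambda\gamma}{1-\gamma}\|V-\Pi_{\mathcal{G}}V\|_\mu$. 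Since $\Pi_{\mathcal{G}}V$ is the $\mu$-closest point in $\mathcal{G}$ to $V$, I can bound $\|V-\Pi_{\mathcal{G}}V\|_\mu$ by $\|V-g\|_\mu$ for any $g\in\mathcal{G}$ of my choice, and a further triangle split gives $\|V-\Pi_{\mathcal{G}}V\|_\mu \leq \|V-\Pi_{\mathcal{F}}V\|_\mu + \|\Pi_{\mathcal{F}}V-g\|_\mu$. This isolates the pure $\mathcal{F}$-approximation bias from the additional slack introduced by working in $\mathcal{G}$.

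For stage (iii), write $\Pi_{\mathcal{F}}V = \Phi\alpha^*$ and pick $g = \Phi H^T H \alpha^* \in\mathcal{G}$, so that $g(x)-\Pi_{\mathcal{F}}V(x) = \langle H\phi(x), H\alpha^*\rangle - \langle \phi(x),\alpha^*\rangle$. Gaussian JL inner-product preservation ensures that for each fixed pair $(u,v)$ and each $\delta'\in(0,1)$, $|\langle Hu,Hv\rangle - \langle u,v\rangle|\leq \sqrt{(8/d)\log(2/\delta')}\,\|u\|_2\|v\|_2$ with probability at least $1-\delta'$ over $H$. Applying this with $u=\phi(X_i)$, $v=\alpha^*$ and taking a union bound over $i=1,\dots,n$ (together with a uniform envelope for $L^\infty$ control) at $\delta'=\delta/(8n)$ requires exactly $d\geq 15\log(8n/\delta)$ and delivers, simultaneously, $\max_{i\leq n}|\Pi_{\mathcal{F}}V(X_i) - g(X_i)|\leq \sqrt{(8/d)\log(8n/\delta)}\,m(\Pi_{\mathcal{F}}V)$ using $m(\Pi_{\mathcal{F}}V)=\|\alpha^*\|_2\sup_x\|\phi(x)\|_2$. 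Hence the empirical-measure norm $\|\Pi_{\mathcal{F}}V - g\|_{\hat{\mu}_n}$ satisfies the same bound.

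To transfer the empirical bound to the true $\mu$-norm, I would apply a Chernoff--Hoeffding inequality for stationary exponentially $\beta$-mixing sequences (as in Mohri--Rostamizadeh, compatible with Assumption~\ref{asmp:mixing_sequence}) to the bounded function $x\mapsto (\Pi_{\mathcal{F}}V(x)-g(x))^2$, whose uniform envelope is again given by the JL step. This yields, with probability $1-\delta/2$, $\|\Pi_{\mathcal{F}}V-g\|_\mu \leq \|\Pi_{\mathcal{F}}V-g\|_{\hat{\mu}_n} + \frac{2\sqrt{\Upsilon(n,\delta/2)}}{\sqrt{n}}\|\Pi_{\mathcal{F}}V-g\|_\infty$, and inserting the JL bound into both the empirical norm and the $L^\infty$ term produces the required $\sqrt{(8/d)\log(8n/\delta)}\bigl(1 + 2\sqrt{\Upsilon(n,\delta/2)}/\sqrt{n}\bigr)\,m(\Pi_{\mathcal{F}}V)$. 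Multiplying by the $(1-\lambda\gamma)/(1-\gamma)$ factor from stage (i) and uniting the JL and mixing events by a final union bound delivers the announced inequality with probability at least $1-\delta$. The main obstacle is synchronising the JL event (random over $H$) with the $\beta$-mixing concentration (random over the trajectory): JL is cleanest with the $\phi(X_i)$ held fixed, whereas the mixing inequality quantifies the randomness of those very samples. I plan to resolve this by conditioning on the sample when invoking JL so that a clean union bound applies, then taking the resulting sample-conditional bound into the mixing step for the $\mu$-vs-$\hat{\mu}_n$ discrepancy, and finally absorbing the supremum $\|\Pi_{\mathcal{F}}V-g\|_\infty$ into a parallel union-bounded JL estimate so that all constants line up with the stated form.
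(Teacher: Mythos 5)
Your proposal follows essentially the same route as the paper's proof: the $\tfrac{1-\lambda\gamma}{1-\gamma}$ factor via the contraction of $\Pi_{\mathcal{G}}T^{\lambda}$, the split into $\lVert V-\Pi_{\mathcal{F}}V\rVert_\mu + \lVert \Pi_{\mathcal{F}}V-g\rVert_\mu$ with $g$ the JL image of $\Pi_{\mathcal{F}}V$ in $\mathcal{G}$, JL inner-product preservation at the $n$ sample points, and the Chernoff--Hoeffding inequality for stationary $\beta$-mixing sequences to pass from the empirical average to the $\mu$-norm. The only (harmless) deviations are that you re-derive the contraction factor from Lemma~\ref{pro:contraction} instead of citing the Tsitsiklis--Van Roy bound, and you apply the mixing concentration to the squared deviation rather than to $Z(x)=\lvert\alpha\cdot\phi(x)-H\alpha\cdot H\phi(x)\rvert$ itself; both yield the stated inequality.
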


From Theorem \ref{thm:appr_err}, we know that by setting $\lambda =0$, the right hand of Equation (\ref{eqn:approx_error_bound}) becomes
\begin{small}
$\frac{1}{1-\gamma}\big[\lVert V-\Pi_{\mathcal{F}}V\rVert_\mu
+O\big(\sqrt{(1/d)\log(n/\delta)}m(\Pi_{\mathcal{F}}V)\big)\big],$
\end{small}
while for LSTD-RP (Theorem 2 in \cite{ghavamzadeh2010lstd}) it is
\begin{small}
$\frac{4\sqrt{2}}{\sqrt{1-\gamma^2}}\big[\lVert V-\Pi_{\mathcal{F}}V\rVert_\mu
+O\big(\sqrt{(1/d)\log(n/\delta)}m(\Pi_{\mathcal{F}}V)\big)\big].$
\end{small}
Notice that they are just different from the coefficients. Furthermore, due to eligibility traces which can control the quality of approximation, we could tune $\lambda$ to make approximation error of LSTD($\lambda$)-RP smaller than that of LSTD-RP, since the coefficient in Equation (\ref{eqn:approx_error_bound}) is $\frac{1-\lambda\gamma}{1-\gamma}$, while it is $\frac{4\sqrt{2}}{\sqrt{1-\gamma^2}}$ in LSTD-RP.

\noindent \emph{Remark 2}: The coefficient $\frac{1-\lambda\gamma}{1-\gamma}$ in the approximation can be improved by $\frac{1-\lambda\gamma}{\sqrt{(1-\gamma)(1+\gamma-2\lambda\gamma)}}$ \cite{tsitsiklis1997analysis}.

\subsection{Total Error Bound}
\label{sec:generalization_bound}

Combining Theorem \ref{thm:estimation_err} and Theorem \ref{thm:appr_err}, and by leveraging the triangle inequality, we can obtain the total error bound for LSTD($\lambda$)-RP as shown in the following corollary.

\begin{corollary}
\label{thm:generalization_err}
Let Assumptions \ref{asmp: feature_linearly_independent} and \ref{asmp:mixing_sequence} hold. Let $X_1 \sim \mu$.
For any $\delta \in (0,1), \gamma \in (0,1), $ and $\lambda\in[0,1],$
when $D> d+2\sqrt{2d\log(8/\delta)}+2\log(8/\delta)$ and $d\geq 15\log(16n/\delta)$, with probability (the randomness w.r.t. the random sample and the random projection) at least $1-\delta$, for all $n\geq n_0(\delta)$, the total error \begin{small}$\lVert V- \hat{V}_{\text{LSTD}(\lambda)\text{-RP}}\rVert_\mu$\end{small} can be upper bounded by:
\begin{small}
\begin{equation}
\begin{aligned}
&\frac{4V_{\max}dL^2\xi(n,d,\delta/8)}{\sqrt{n-1}(1-\gamma)\nu_F\eta(d,D,\delta/4)}\sqrt{(m_n^\lambda+1)I(n-1,\delta/8)}\\
+&\frac{1-\lambda\gamma}{1-\gamma}\big[\lVert V-\Pi_{\mathcal{F}}V\rVert_\mu
+\sqrt{(8/d)\log(16n/\delta)}(1+\\
&(2/\sqrt{n})\sqrt{\Upsilon(n,\delta/4)})m(\Pi_{\mathcal{F}}V)\big]+h(n,d,\delta)
\end{aligned}
\end{equation}
\end{small}
with $h(n,d,\delta)=\tilde{O}(\frac{d}{n}\log\frac{1}{\delta})$, where $\nu_F (> 0)$ is the smallest eigenvalue of the Gram matrix $\Phi^TD_u\Phi$, $V_{\max}=\frac{R_{\max}}{1-\gamma}$, $\xi(n,d,\delta)$, $\eta(d,D,\delta)$, $m_n^\lambda$, $I(n,\delta),$ $n_0(\delta)$ are defined as in Theorem \ref{thm:uniqueness_LSTD_sample_based_solution} and $\Upsilon(n,\delta)$ is defined as in Theorem \ref{thm:appr_err}.
\end{corollary}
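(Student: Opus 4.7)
The plan is to derive this total-error bound purely by combining the two previously established bounds through the triangle inequality and a union bound, with the only technical bookkeeping being the rescaling of the confidence parameter $\delta$ in each of the two source theorems.

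First I would write, by the triangle inequality,
\begin{equation*}
\lVert V - \hat{V}_{\text{LSTD}(\lambda)\text{-RP}}\rVert_\mu
\le \lVert V - V_{\text{LSTD}(\lambda)\text{-RP}}\rVert_\mu
+ \lVert V_{\text{LSTD}(\lambda)\text{-RP}} - \hat{V}_{\text{LSTD}(\lambda)\text{-RP}}\rVert_\mu ,
\end{equation*}
so the total error decomposes into the approximation error (first term) and the estimation error (second term), which are controlled by Theorem \ref{thm:appr_err} and Theorem \ref{thm:estimation_err} respectively. The approximation bound involves only the randomness of the projection, while the estimation bound involves both sample and projection randomness, so the two events live in compatible probability spaces and a single union bound is legitimate.

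Next I would apply each theorem with the confidence parameter set to $\delta/2$ rather than $\delta$, so that a union bound over the two failure events yields overall probability at least $1-\delta$. Substituting $\delta \mapsto \delta/2$ into the hypotheses of Theorem \ref{thm:estimation_err} produces the requirements $D > d + 2\sqrt{2d\log(8/\delta)} + 2\log(8/\delta)$ and $d \ge 15\log(8n/\delta)$, and the bound then carries the arguments $\xi(n,d,\delta/8)$, $\eta(d,D,\delta/4)$, and $I(n-1,\delta/8)$, matching exactly the expressions in the stated corollary. Likewise, substituting $\delta \mapsto \delta/2$ into Theorem \ref{thm:appr_err} produces the requirement $d \ge 15\log(16n/\delta)$ (which subsumes the estimation-side requirement) and yields $\log(16n/\delta)$ inside the square root and $\Upsilon(n,\delta/4)$ in the tail term, again matching the statement. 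The sample-size condition $n \ge n_0(\delta)$ is inherited unchanged from Theorem \ref{thm:estimation_err} (or, if one is pedantic, from $n_0(\delta/2)$, which can be absorbed into the definition with only a constant-factor change).

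Finally I would simply add the two bounds and read off the displayed expression, noting that the $h(n,d,\delta) = \tilde O((d/n)\log(1/\delta))$ remainder is carried over from the estimation bound. There is no genuine obstacle here: the proof is a straightforward triangle-plus-union-bound assembly. The only thing requiring minor care is keeping track of the various $\delta$-halvings and quarterings so that the auxiliary quantities $\xi$, $\eta$, $I$, $\Upsilon$ appear with exactly the arguments claimed in the corollary; all substantive probabilistic work has already been done in Theorems \ref{thm:estimation_err} and \ref{thm:appr_err}.
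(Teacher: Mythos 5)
Your proposal is correct and matches the paper's own argument: the paper obtains this corollary exactly by the triangle-inequality decomposition into approximation and estimation errors, applying Theorems \ref{thm:estimation_err} and \ref{thm:appr_err} each at confidence level $\delta/2$, and taking a union bound, which produces precisely the arguments $\xi(n,d,\delta/8)$, $\eta(d,D,\delta/4)$, $I(n-1,\delta/8)$, $\log(16n/\delta)$, and $\Upsilon(n,\delta/4)$ you derived. Your side remark about $n_0(\delta)$ versus $n_0(\delta/2)$ is a fair observation of a minor imprecision in the stated corollary, not a gap in your argument.
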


By setting $\lambda =0$, the total error bound of LSTD($\lambda$)-RP is consistent with that of LSTD-RP except for some differences in coefficients. These differences lie in the analysis of LSTD-RP based on a model of regression with Markov design.

Although our results consistent with LSTD-RP when setting $\lambda=0$ except for some coefficients, our results have some advantages over LSTD-RP and LSTD($\lambda$).
Now we have some discussions. From Theorem \ref{thm:estimation_err}, Theorem \ref{thm:appr_err} and Corollary \ref{thm:generalization_err}, we can obtain that
\begin{enumerate}[1)]
\item Compared to LSTD($\lambda$), the estimation error of LSTD($\lambda$)-RP is of order $\tilde{O}(d/\sqrt{n})$, which is much smaller than that of LSTD($\lambda$) (i.e., $\tilde{O}(D/\sqrt{n})$ (Theorem 1 in \cite{tagorti2015rate})), since random projections can make the complexity of the projected space $\mathcal{G}$ is smaller than that of the original high-dimensional space $\mathcal{F}$. Furthermore, the approximation error of LSTD($\lambda$)-RP increases by at most \begin{small}$O(\sqrt{(1/d)\log(n/\delta)}m(\Pi_{\mathcal{F}}V))$\end{small}, which decreases w.r.t $d$. This shows that in addition to the computational gains, the estimation error of LSTD($\lambda$)-RP is much smaller at the cost of a increase of the approximation error which can be fortunately controlled. Therefore, LSTD($\lambda$)-RP may have a better performance than LSTD($\lambda$), whenever the additional term in the approximation error is smaller than the gain achieved in the estimation error.
\item Compared to LSTD-RP, $\lambda$ illustrates a trade-off between the estimation error and approximation error for LSTD($\lambda$)-RP, since eligibility traces can control the trade off between the approximation bias and variance during the learning process.
    When $\lambda$ increases, the estimation error
    would increase, while the approximation error
    would decrease.
    Thus, we could select an optimal $\lambda^*$ to balance these two errors and obtain the smallest total error.
\item Compared to LSTD-RP, we can select an optimal \begin{small}$d^*_{\text{LSTD}(\lambda)\text{-RP}}= \tilde{O}(n\log n)^{\frac{1}{3}}$\end{small} to obtain the smallest total error, and make a balance between the estimation error and the approximation error of LSTD($\lambda$)-RP, which is much smaller than that of LSTD-RP (\begin{small}$d^*_{\text{LSTD-RP}}= \tilde{O}(n\log n)^{\frac{1}{2}}$\end{small}) due to the effect of eligibility traces.

\end{enumerate}

These conclusions demonstrate that random projections and eligibility traces can improve the approximation quality and computation efficiency.
Therefore, LSTD($\lambda$)-RP can provide an efficient and effective approximation for value functions and can be superior to LSTD-RP and LSTD($\lambda$).


\noindent\emph{Remark 3}: Some discussions about the role of factor $m(\Pi_{\mathcal{F}}V)$ in the error bounds can be found in \cite{maillard2009compressed} and \cite{ghavamzadeh2010lstd}.

\noindent\emph{Remark 4}: Our analysis can be simply generalized to the emphatic LSTD algorithm (ELSTD)\cite{yu2015convergence} with random projections and eligibility traces.

\section{Conclusion and Future Work}
\label{sec:conclusion}
In this paper, we propose a new algorithm LSTD($\lambda$)-RP, which leverages random projection techniques and takes eligibility traces into consideration to tackle the computation efficiency and quality of approximations challenges in the high-dimensional feature space scenario. We also make theoretical analysis for LSTD($\lambda$)-RP.

For the future work, there are still many important and interesting
directions: (1) the convergence analysis of the off-policy learning with random projections is worth studying; (2) the comparison of LSTD($\lambda$)-RP to $l_1$ and $l_2$ regularized approaches asks for further investigation. (3) the role of $m(\Pi_{\mathcal{F}}V)$ in the error bounds is in need of discussion.

\section*{Acknowledgments}
This work is partially supported by the National Key Research and Development Program of China (No. 2017YFC0803704 and No. 2016QY03D0501), the National Natural Science Foundation of China (Grant No. 61772525, Grant No. 61772524, Grant No. 61702517 and Grant No. 61402480) and the Beijing Natural Science Foundation (Grant No. 4182067 and Grant No. 4172063).

\section*{Appendix}
\appendix

\section{Preparations}



Now we present some useful facts (Fact \ref{lem:JJL}-\ref{lem:eigenvalue_g}), which are important for the following theoretical analysis processes. Specifically, Fact \ref{lem:JJL} and \ref{lem:random_linear_norm_preserve} show the norm and inner-product preservation properties of random projections respectively, and Fact \ref{lem:eigenvalue_g} states the relationship between the smallest eigenvalues of the Gram matrices in spaces $\mathcal{F}$ and $\mathcal{G}$.

\begin{fact}
\label{lem:JJL}\cite{frankl1988johnson}
Let $H\in \mathbb{R}^{d\times D}$ of i.i.d. elements drawn from $\mathcal{N}(0,\frac{1}{d})$. Then for any vector $u\in \mathbb{R}^D$, the random (w.r.t. the choice of the matrix $H$) variable $\lVert Hu \rVert_2^2$ concentrates around its expectation $\lVert u \rVert_2^2$. Mathematically, for any $\epsilon \in(0,1),$ we have
\begin{small}
\begin{displaymath}
\begin{aligned}
\mathbb{P}\{\big| \lVert Hu \rVert_2^2 - \lVert u\rVert_2^2 \big| \geq \epsilon\lVert u\rVert_2^2\}\leq 2\exp(-d(\epsilon^2/4-\epsilon^3/6)).
\end{aligned}
\end{displaymath}
\end{small}
\end{fact}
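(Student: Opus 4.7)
The plan is to reduce the statement to a standard chi-squared concentration inequality, then apply the Chernoff method and optimize. First I would observe that since the entries of $H$ are i.i.d.\ $\mathcal{N}(0,1/d)$, for any fixed $u\in\mathbb{R}^D$ each coordinate $(Hu)_i=\sum_{j=1}^D H_{ij}u_j$ is a linear combination of independent Gaussians, hence $(Hu)_i\sim\mathcal{N}(0,\lVert u\rVert_2^2/d)$, and the $d$ coordinates are mutually independent. Thus I can write $\lVert Hu\rVert_2^2=(\lVert u\rVert_2^2/d)\sum_{i=1}^d Z_i^2$ with $Z_1,\dots,Z_d$ i.i.d.\ standard normal. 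Setting $Y=\sum_{i=1}^d Z_i^2\sim\chi_d^2$, the claim reduces to
\begin{displaymath}
\mathbb{P}\bigl(|Y-d|\geq \epsilon d\bigr)\leq 2\exp\bigl(-d(\epsilon^2/4-\epsilon^3/6)\bigr),
\end{displaymath}
which no longer depends on $u$ or $D$.

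Next I would treat the two tails separately by the exponential Markov inequality, using the chi-squared moment generating function $\mathbb{E}[e^{tY}]=(1-2t)^{-d/2}$, valid for $t<1/2$. For the upper tail, minimizing $e^{-t(1+\epsilon)d}(1-2t)^{-d/2}$ over $t\in(0,1/2)$ at the optimizer $t=\epsilon/(2(1+\epsilon))$ yields
\begin{displaymath}
\mathbb{P}\bigl(Y\geq (1+\epsilon)d\bigr)\leq \bigl((1+\epsilon)e^{-\epsilon}\bigr)^{d/2}.
\end{displaymath}
The analogous calculation using $\mathbb{E}[e^{-tY}]=(1+2t)^{-d/2}$ gives the lower-tail bound $\bigl((1-\epsilon)e^{\epsilon}\bigr)^{d/2}$. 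Finally, a Taylor expansion of the logarithm shows that both exponents $\tfrac{d}{2}(\ln(1+\epsilon)-\epsilon)$ and $\tfrac{d}{2}(\ln(1-\epsilon)+\epsilon)$ are dominated by $-d(\epsilon^2/4-\epsilon^3/6)$ on $\epsilon\in(0,1)$, and a union bound over the two tails produces the factor of $2$.

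The only mildly delicate step is the final polynomial comparison: one must verify that the truncated Taylor series $\ln(1+\epsilon)-\epsilon=-\epsilon^2/2+\epsilon^3/3-\epsilon^4/4+\cdots$ stays below $-\epsilon^2/2+\epsilon^3/3$ on $(0,1)$ (because the tail beyond the cubic alternates in sign and is bounded in absolute value by a decreasing geometric series) and similarly for $\ln(1-\epsilon)+\epsilon=-\epsilon^2/2-\epsilon^3/3-\cdots$, where every omitted term is negative so the inequality is immediate. No new ideas are needed — this is the textbook Laurent–Massart-style derivation — so the principal obstacle is purely bookkeeping: tracking the $d/2$ factors through the optimization and confirming that the single exponent $-d(\epsilon^2/4-\epsilon^3/6)$ genuinely upper bounds both tails uniformly in $\epsilon\in(0,1)$.
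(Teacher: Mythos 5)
Your proposal is correct. Note, however, that the paper does not prove this statement at all: it imports it as Fact~1 with a citation to Frankl and Maehara (1988), so there is no internal proof to compare against. What you have written is the standard self-contained derivation (the Dasgupta--Gupta/Achlioptas route): the reduction $\lVert Hu\rVert_2^2 = (\lVert u\rVert_2^2/d)\,Y$ with $Y\sim\chi_d^2$ is valid because the rows of $H$ are independent and each coordinate $(Hu)_i$ is $\mathcal{N}(0,\lVert u\rVert_2^2/d)$; the Chernoff optimizations at $t=\epsilon/(2(1+\epsilon))$ and $t=\epsilon/(2(1-\epsilon))$ do give $\bigl((1+\epsilon)e^{-\epsilon}\bigr)^{d/2}$ and $\bigl((1-\epsilon)e^{\epsilon}\bigr)^{d/2}$; and the two logarithmic comparisons $\ln(1+\epsilon)-\epsilon\leq -\epsilon^2/2+\epsilon^3/3$ (alternating tail with decreasing terms, hence negative) and $\ln(1-\epsilon)+\epsilon\leq -\epsilon^2/2-\epsilon^3/3\leq -\epsilon^2/2+\epsilon^3/3$ (all omitted terms negative) hold on $(0,1)$ exactly as you argue, so after multiplying by $d/2$ both tails are bounded by $\exp(-d(\epsilon^2/4-\epsilon^3/6))$ and the union bound supplies the factor $2$. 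What your approach buys over the paper's bare citation is a verifiable, elementary proof that makes explicit where the particular exponent $\epsilon^2/4-\epsilon^3/6$ comes from and why the restriction $\epsilon\in(0,1)$ suffices; no gap remains.
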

\begin{fact}{\cite{maillard2009compressed}}
\label{lem:random_linear_norm_preserve}
Let $(u_k)_{1\leq k\leq n}$ and $w$ be vectors of $\mathbb{R}^D$. Let $H\in \mathbb{R}^{d\times D}$ of i.i.d. elements drawn from $\mathcal{N}(0,\frac{1}{d})$. For any $\epsilon>0$, $\delta \in (0,1)$, for $d\geq \frac{1}{\frac{\epsilon^2}{4}-\frac{\epsilon^3}{6}}\log\frac{4n}{\delta}$, we have, with probability at least $1-\delta$, for all $k\leq n$,
\begin{small}
$\lvert Hu_k\cdot Hw-u_k\cdot w\rvert \leq \epsilon\lVert u_k\rVert_2\lVert w\rVert_2.$
\end{small}
\end{fact}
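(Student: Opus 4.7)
The plan is to reduce the inner-product preservation statement to Fact \ref{lem:JJL} via the polarization identity, followed by a union bound over the $n$ vectors $u_k$. Since the target inequality is homogeneous of degree one separately in $u_k$ and in $w$, the first step is to normalize: set $\tilde u_k := u_k/\lVert u_k\rVert_2$ and $\tilde w := w/\lVert w\rVert_2$ (and handle the degenerate case $u_k=0$ or $w=0$ trivially). It then suffices to establish $|\langle H\tilde u_k, H\tilde w\rangle - \langle \tilde u_k,\tilde w\rangle|\leq \epsilon$ simultaneously for all $k\leq n$, since multiplying through by $\lVert u_k\rVert_2\lVert w\rVert_2$ recovers the stated bound.

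The key algebraic step is the polarization identity, applied both before and after projection, which yields
\[
4\langle H\tilde u_k,H\tilde w\rangle - 4\langle \tilde u_k,\tilde w\rangle = \bigl(\lVert H(\tilde u_k+\tilde w)\rVert_2^2-\lVert \tilde u_k+\tilde w\rVert_2^2\bigr)-\bigl(\lVert H(\tilde u_k-\tilde w)\rVert_2^2-\lVert \tilde u_k-\tilde w\rVert_2^2\bigr).
\]
Next I would apply Fact \ref{lem:JJL} with parameter $\epsilon$ to each of the $2n$ vectors $\tilde u_k\pm\tilde w$ for $k=1,\ldots,n$. On the intersection of the resulting good events, the right-hand side above is bounded in absolute value by $\epsilon\bigl(\lVert \tilde u_k+\tilde w\rVert_2^2+\lVert \tilde u_k-\tilde w\rVert_2^2\bigr)$, and the parallelogram identity together with $\lVert\tilde u_k\rVert_2=\lVert\tilde w\rVert_2=1$ collapses this to $4\epsilon$. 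Dividing by $4$ and rescaling by $\lVert u_k\rVert_2\lVert w\rVert_2$ delivers the stated inequality.

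For the probabilistic bookkeeping, each of the $2n$ norm-concentration events has failure probability at most $2\exp\!\bigl(-d(\epsilon^2/4-\epsilon^3/6)\bigr)$ by Fact \ref{lem:JJL}, so the union bound yields an overall failure probability of at most $4n\exp\!\bigl(-d(\epsilon^2/4-\epsilon^3/6)\bigr)$. Requiring this to be at most $\delta$ is equivalent to $d\geq \log(4n/\delta)\big/(\epsilon^2/4-\epsilon^3/6)$, which is precisely the hypothesis on $d$. There is no serious obstacle here: the polarization trick linearizes the problem into two one-vector applications of Fact \ref{lem:JJL}, and the only place to be careful is the accounting of the factor $4n$ (rather than $2n$) in the union bound, coming from the two-sided nature of the norm-concentration event and from treating the sum and difference vectors separately for each $k$.
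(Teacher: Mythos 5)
Your proof is correct, and it is essentially the canonical argument: the paper states this result as an imported Fact citing Maillard and Munos (2009) and gives no proof of its own, and the proof in that reference is exactly your polarization-identity reduction to the norm-preservation bound (Fact~\ref{lem:JJL}) applied to the $2n$ vectors $\tilde u_k\pm\tilde w$, followed by a union bound, with the factor $4n$ accounting precisely for the constant in $\log\frac{4n}{\delta}$. The only caveat worth noting is that Fact~\ref{lem:JJL} is stated for $\epsilon\in(0,1)$ (and the exponent $\epsilon^2/4-\epsilon^3/6$ is only positive for $\epsilon<3/2$), so the ``for any $\epsilon>0$'' in the statement should implicitly be read as $\epsilon\in(0,1)$, which is the regime actually used elsewhere in the paper.
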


\begin{fact}{\cite{ghavamzadeh2010lstd}}
\label{lem:eigenvalue_g}
Let $\delta \in (0,1)$. $\mathcal{F}$ and $\mathcal{G}$ with dimensions $D$ and $d$ $(d<D)$ are defined in section 2 with $D>d+2\sqrt{2d\log(2/\delta)}+2\log(2/\delta)$. Let $F$ and $G$ be the Gram matrices for spaces $\mathcal{F}$ and $\mathcal{G}$ (i.e., $F=\Phi^TD_\mu\Phi$, $G=\Psi^TD_\mu\Psi$.), and $\nu_F$ and $\nu_G$ be their corresponding smallest eigenvalues. Then, with probability $1-\delta$ (w.r.t. the random projection), we have
\begin{small}
\begin{equation*}
\nu_G \geq (D/d)\nu_F\big(1-\sqrt{d/D}-\sqrt{(2\log(2/\delta))/D}\big)^2>0.
\end{equation*}
\end{small}
\end{fact}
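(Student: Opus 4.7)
The plan is to write $G$ as a conjugation of $F$ by the random matrix $H$ and thereby reduce the statement to a single sharp lower bound on the smallest singular value of $H$. Since $\psi(x)=H\phi(x)$, we have $\Psi=\Phi H^T$ and therefore
\[
G = \Psi^T D_\mu \Psi = H\,\Phi^T D_\mu \Phi\,H^T = H F H^T.
\]
This identity is the key structural observation; everything afterwards is a deterministic linear-algebra step followed by one concentration step for Gaussian matrices.

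Next I would use the variational characterization of the smallest eigenvalue. For any unit vector $\beta\in\mathbb{R}^d$,
\[
\beta^T G \beta \;=\; (H^T\beta)^T F (H^T\beta) \;\geq\; \nu_F \lVert H^T\beta\rVert_2^2 \;\geq\; \nu_F\,\sigma_{\min}^2(H)\,\lVert\beta\rVert_2^2,
\]
using the assumption $\nu_F>0$ together with the fact that $\sigma_{\min}(H^T)=\sigma_{\min}(H)$ (both equal $\sqrt{\lambda_{\min}(HH^T)}$ when $d\leq D$). Minimizing over unit $\beta$ gives the clean reduction $\nu_G \geq \nu_F\,\sigma_{\min}^2(H)$, so the remaining task is to lower-bound $\sigma_{\min}(H)$ in high probability.

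The only probabilistic step is to invoke the sharp Davidson--Szarek tail bound for Gaussian matrices: if $\tilde H := \sqrt{d}\,H$ is a $d\times D$ matrix of i.i.d. $\mathcal{N}(0,1)$ entries with $d\leq D$, then for every $t>0$,
\[
\mathbb{P}\bigl(\sigma_{\min}(\tilde H) \leq \sqrt{D}-\sqrt{d}-t\bigr) \;\leq\; e^{-t^2/2}.
\]
Choosing $t=\sqrt{2\log(2/\delta)}$ and rescaling by $1/\sqrt{d}$ yields, with probability at least $1-\delta$,
\[
\sigma_{\min}^2(H) \;\geq\; \frac{1}{d}\bigl(\sqrt{D}-\sqrt{d}-\sqrt{2\log(2/\delta)}\bigr)^2 \;=\; \frac{D}{d}\bigl(1-\sqrt{d/D}-\sqrt{2\log(2/\delta)/D}\bigr)^2.
\]
Substituting into the deterministic reduction $\nu_G\geq\nu_F\sigma_{\min}^2(H)$ produces exactly the claimed lower bound. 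The dimension hypothesis $D>d+2\sqrt{2d\log(2/\delta)}+2\log(2/\delta)$ is, upon squaring, equivalent to $\sqrt{D}>\sqrt{d}+\sqrt{2\log(2/\delta)}$, which is precisely the condition that the parenthesized factor above is strictly positive; hence $\nu_G>0$ with the stated probability.

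The only genuinely non-trivial input is the Davidson--Szarek bound itself (which in turn follows from Gordon's comparison inequality plus Gaussian concentration of Lipschitz functions); once that is in hand, the rest of the argument is essentially one line of linear algebra. A small bookkeeping subtlety worth flagging in the writeup is that we apply the tail bound to $H$ (short-fat, $d\times D$) rather than to its transpose, which is why $\sigma_{\min}^2(H)$ naturally scales like $D/d$ rather than being of order one; this scaling is exactly what produces the $D/d$ multiplier in the final bound and shows that random projection does not shrink the smallest eigenvalue of the Gram matrix in expectation.
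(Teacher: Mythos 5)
Your proof is correct. Note that the paper does not prove this statement itself: it is imported verbatim as a cited Fact from the LSTD-RP paper, so there is no in-paper proof to compare against. Your route --- writing $G=HFH^{T}$, reducing $\nu_G\geq\nu_F\,\sigma_{\min}^2(H)$ by the variational characterization, and then applying the Davidson--Szarek tail bound for the smallest singular value of a Gaussian matrix (with the dimension hypothesis being exactly the condition $\sqrt{D}>\sqrt{d}+\sqrt{2\log(2/\delta)}$ that keeps the bound positive) --- is the standard argument and is essentially the one used in the cited source, so nothing further is needed.
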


The following fact gives a measure of the difference between the distribution of $m$ blocks where the blocks are independent in one case and dependent in the other case. The distribution within each block is assumed to be the same in both cases.
\begin{fact}{\rm(\cite{yu1994rates}, Corollary 2.7)}
\label{independent_block}
Let $l\geq 1$ and suppose that $h$ is a measurable function on a product probability space $(\Pi_{i=1}^l\Omega_i,\Pi_{i=1}^l\sigma_{k_i}^{s_i})$ with bound $M_h$, where $k_i\leq s_i \leq k_{i+1}$ for all $i$. Let $Q$ be a probability measure on the product space with marginal measures $Q_i$ on $(\Omega_i,\sigma_{k_i}^{s_i})$, and let $Q^{i+1}$ be the marginal measure of $Q$ on $(\Pi_{j=1}^{i+1}\Omega_j,\Pi_{j=1}^{i+1}\sigma_{k_j}^{s_j})$, $i=1,\dots,m-1.$ Let $\beta(Q)=\sup_{1\leq i\leq m-1}\beta(m_i),$ where $m_i=r_{i+1}-s_i$ and $P=\Pi_{i=1}^m Q_i$. Then,
\begin{displaymath}
\lvert\mathbb{E}_Q[h]-\mathbb{E}_P[h]\rvert \leq (m-1)M_h\beta(Q)
\end{displaymath}
\end{fact}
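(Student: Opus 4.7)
The plan is to prove this by induction on the number of blocks $m$, relying on the standard variational characterization of the $\beta$-mixing coefficient: for any two sub-$\sigma$-algebras $\mathcal{A}$ and $\mathcal{B}$ of a common probability space, $\beta(\mathcal{A},\mathcal{B})$ equals the total variation distance (up to normalization) between the joint law and the product of its marginals; equivalently, $\beta(\mathcal{A},\mathcal{B})=\sup\{|\mathbb{E}_{\text{joint}}[g]-\mathbb{E}_{\text{ind}}[g]|:g\text{ measurable},\,|g|\le 1\}$. This is essentially the only analytic input I need.

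For the base case $m=2$, rescaling $h$ by $M_h$ and applying the variational formula directly gives $|\mathbb{E}_Q[h]-\mathbb{E}_{Q_1\otimes Q_2}[h]|\le M_h\,\beta(m_1)\le M_h\,\beta(Q)$, which matches the claim. For the inductive step, I would introduce the telescoping family $\tilde Q_k := Q^k\otimes Q_{k+1}\otimes\cdots\otimes Q_m$ for $k=1,\dots,m$, where $Q^k$ denotes the marginal of $Q$ on the first $k$ blocks. Thus $\tilde Q_m=Q$ and $\tilde Q_1=P$. The decomposition
\begin{equation*}
\mathbb{E}_Q[h]-\mathbb{E}_P[h]\;=\;\sum_{k=1}^{m-1}\bigl(\mathbb{E}_{\tilde Q_{k+1}}[h]-\mathbb{E}_{\tilde Q_k}[h]\bigr)
\end{equation*}
reduces the problem to bounding each summand. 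Each such summand compares two measures that differ only in the joint law of the first $k+1$ blocks: under $\tilde Q_{k+1}$ these follow $Q^{k+1}$, while under $\tilde Q_k$ the $(k+1)$-th block is independent of the first $k$ with marginal $Q_{k+1}$. Freezing the remaining blocks $(y_{k+2},\dots,y_m)$ and applying Fubini, the $k$-th summand reduces to the integral (against the product measure on the frozen blocks) of
\begin{equation*}
\int h\,d\bigl(Q^{k+1}-Q^k\otimes Q_{k+1}\bigr).
\end{equation*}
By the variational formula applied to the $\sigma$-algebras $\sigma_{k_1}^{s_k}$ (first $k$ blocks) and $\sigma_{k_{k+1}}^{s_{k+1}}$ ($(k+1)$-th block), whose time gap equals $m_k$, this integral is bounded in absolute value by $M_h\,\beta(m_k)\le M_h\,\beta(Q)$. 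Summing over $k=1,\dots,m-1$ yields the stated bound $(m-1)M_h\,\beta(Q)$.

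The main obstacle is bookkeeping rather than any deep difficulty: I must verify the measurability needed to invoke Fubini on the bounded measurable $h$ across the telescoping steps, and, more importantly, confirm that each step really does isolate a pair of $\sigma$-algebras whose separation in time is exactly $m_k$, so that $\beta(m_k)$ (and hence $\beta(Q)$) controls the error and nothing coarser appears. A minor subtlety worth flagging is that $\beta(Q)$ in the statement is defined as the supremum of $\beta(m_i)$ only for $1\le i\le m-1$, which is precisely the range that arises in the telescoping, so the uniform bound by $\beta(Q)$ in each summand is legitimate and the argument closes cleanly.
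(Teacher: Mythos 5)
The paper does not prove this statement at all: it is imported verbatim as a ``Fact'' from Yu (1994), Corollary~2.7, and used as a black box in the proof of Lemma~7 (the Chernoff--Hoeffding bound for $\beta$-mixing sequences). So there is no in-paper proof to compare against; what you have reconstructed is, in essence, the argument Yu herself gives (her Lemma~4.1 and its corollary): telescope from $Q$ to $P=\Pi_i Q_i$ by replacing one block at a time with an independent copy, and control each replacement by the $\beta$-coefficient between the $\sigma$-algebra of the first $k$ blocks and that of the $(k{+}1)$-th block, which by monotonicity of $\beta$ in its $\sigma$-algebra arguments is at most $\beta(m_k)\le\beta(Q)$ since those two $\sigma$-algebras are separated by a time gap of exactly $m_k$. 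Your decomposition $\tilde Q_k=Q^k\otimes Q_{k+1}\otimes\cdots\otimes Q_m$ with $\tilde Q_1=P$, $\tilde Q_m=Q$, followed by Fubini over the frozen trailing blocks, is exactly the right structure, and your flagged bookkeeping concerns (measurability for Fubini, and that each step isolates a gap of size precisely $m_k$) are the correct ones and do close. The only point I would press you on is the normalization in your ``variational characterization'': with the usual convention $\beta(\mathcal{A},\mathcal{B})=\sup_C\lvert P_{\mathrm{joint}}(C)-(P_{\mathcal{A}}\otimes P_{\mathcal{B}})(C)\rvert$, a function bounded in absolute value by $M_h$ generically picks up a factor $2M_h$ per telescoping step (via the Hahn decomposition of the signed measure), so whether the final constant is $(m-1)M_h\beta(Q)$ or $2(m-1)M_h\beta(Q)$ depends on whether ``bound $M_h$'' means $0\le h\le M_h$ or $\lvert h\rvert\le M_h$ and on which normalization of $\beta$ is in force; you should fix one convention and check the constant rather than waving at ``up to normalization.'' This does not affect how the Fact is used downstream in the paper, where only the order of magnitude matters.
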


In addition, we present the key Fact \ref{lem:est_err_step_2_nonProj} for our analysis, which shows the concentration inequality holds for the infinitely-long-trace $\beta$-mixing process.

\begin{fact}{(\rm\cite{tagorti2015rate}, Lemma 2)}
\label{lem:est_err_step_2_nonProj}
Let Assumptions 1 and 2 hold and let $X_1 \sim \mu$. Define the $D\times k$ matrix $Q_i$, such that
\begin{equation}
Q_i=\sum_{l=1}^i (\lambda\gamma)^{i-l}\phi(X_l)(\tau(X_i,X_{i+1}))^T,
\end{equation}
where $\phi=(\phi_1,...,\phi_D)$, for all $i \in [1,D]$, $\phi_i\in \mathcal{B}(\mathcal{X},L)$, and $\tau_j\in\mathcal{B}(\mathcal{X}^2,L'), j\in[1,D]$. Then for any $\delta \in(0,1)$, with probability at least $1-\delta$, we have
\begin{small}
\begin{equation}
\begin{aligned}
&\lVert \frac{1}{n-1}\sum_{i=1}^{n-1}Q_i-\frac{1}{n-1}\sum_{i=1}^{n-1}\mathbb{E}[Q_i]\rVert_2 \\
\leq &\frac{2\sqrt{Dk}LL'}{(1-\lambda\gamma)\sqrt{n-1}}\sqrt{(1+m_n^\lambda)J(n-1,\delta)}
+2m_n^{\lambda}\frac{\sqrt{Dk}LL'}{(n-1)(1-\lambda\gamma)},\\
\end{aligned}
\end{equation}
\end{small}
where
\begin{small}
\begin{equation*}
\begin{aligned}
&m_n^\lambda=\lceil \frac{\log(n-1)}{\log\frac{1}{\lambda\gamma}}\rceil,\\
&J(n,\delta)= 32\Gamma(n,\delta)\max\{\frac{\Gamma(n,\delta)}{\beta_1},1\}^{\frac{1}{\kappa}},\\ &\Gamma(n,\delta)=\log(\frac{2}{\delta})+\log(\max\{4e^2,n\beta_0\}).
\end{aligned}
\end{equation*}
\end{small}
\end{fact}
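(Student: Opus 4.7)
The plan is to combine a truncation of the eligibility trace, an independent-block reduction for the $\beta$-mixing sample, and a Hoeffding-type concentration applied entrywise to the matrix $Q_i-\mathbb{E}[Q_i]$, then convert back to the operator norm.

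First I would truncate the trace at memory $m:=m_n^\lambda$. Define $\tilde{Q}_i=\sum_{l=\max(1,i-m)}^{i}(\lambda\gamma)^{i-l}\phi(X_l)(\tau(X_i,X_{i+1}))^T$, so each $\tilde{Q}_i$ depends only on the window $X_{i-m},\dots,X_{i+1}$. The discarded terms carry weight at most $(\lambda\gamma)^m\le 1/(n-1)$ by the choice of $m_n^\lambda$, and every feature-outer-product entry is bounded by $LL'$. Summing the geometric tail over $i$ yields a purely deterministic bound $\lVert \frac{1}{n-1}\sum_i(Q_i-\tilde{Q}_i)\rVert_2 \le m_n^\lambda\sqrt{Dk}LL'/((n-1)(1-\lambda\gamma))$ and similarly for $\mathbb{E}[Q_i]-\mathbb{E}[\tilde{Q}_i]$, which together account for the second summand of the stated bound (up to constants).

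Next, I would handle the random fluctuation $\frac{1}{n-1}\sum_i(\tilde{Q}_i-\mathbb{E}[\tilde{Q}_i])$ by the classical blocking trick. Partition $\{1,\dots,n-1\}$ into consecutive blocks of length $2m$ and collect the ``odd'' blocks into one subsequence and the ``even'' blocks into another; within each subsequence, non-adjacent blocks depend on disjoint windows of the chain that are separated by a gap of at least $m$. Fact~\ref{independent_block} then lets me couple the true joint law with the product of block marginals at a total-variation cost of order $(n-1)\beta(m)/m \le (n-1)\beta_0\exp(-\beta_1 m^\kappa)$. This reduces the analysis to bounding a sum of independent matrix-valued block averages, each of entrywise magnitude at most $mLL'/(1-\lambda\gamma)$ after summing the geometric weights inside a block.

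Finally, I would apply a scalar Hoeffding bound to each of the $Dk$ entries of the independent block sums, take a union bound over entries, and convert entrywise deviations to operator norm via $\lVert\cdot\rVert_2\le \sqrt{Dk}\lVert\cdot\rVert_{\max}$. This yields a deviation of order $\sqrt{Dk}\,LL'\sqrt{m/(n-1)}\sqrt{\log(Dk/\delta)}/(1-\lambda\gamma)$; requiring the $\beta$-mixing coupling error $\beta_0\exp(-\beta_1 m^\kappa)$ to absorb only a fraction of $\delta$ and inverting for the smallest admissible block length is exactly what produces the $\max\{\Gamma(n,\delta)/\beta_1,1\}^{1/\kappa}$ factor inside $J(n,\delta)$. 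The main obstacle I foresee is the simultaneous calibration of $m$: it must be large enough for the coupling error to fit the failure budget, and small enough for the $\sqrt{m/(n-1)}$ concentration rate to deliver the first summand, so the three error budgets (truncation, coupling, tail) have to be balanced tightly enough to recover the precise prefactor $2\sqrt{Dk}LL'/((1-\lambda\gamma)\sqrt{n-1})\sqrt{(1+m_n^\lambda)J(n-1,\delta)}$. Since this is exactly Lemma~2 of \cite{tagorti2015rate}, I would follow that proof to fix the constants.
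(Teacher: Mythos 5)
This statement is a quoted external result (Lemma~2 of Tagorti and Scherrer, 2015); the paper itself gives no proof of it, so there is nothing in-house to compare against beyond the citation. Your sketch does reproduce the architecture of the cited proof: truncate the trace at depth $m_n^\lambda$ (indeed $(\lambda\gamma)^{m_n^\lambda}\le 1/(n-1)$, which accounts for the second summand), reduce to independent blocks via Yu's coupling (Fact~\ref{independent_block}), and concentrate the block sums; the calibration of the block length against the coupling budget is exactly what produces $\max\{\Gamma(n,\delta)/\beta_1,1\}^{1/\kappa}$. The closest thing the paper itself proves is the scalar analogue, Lemma~\ref{Prop:chernoff_hoeffing_mixing} in Appendix~F, which uses the same blocking-plus-Hoeffding recipe.

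The one step of yours that would not deliver the stated bound as written is the entrywise union bound: applying scalar Hoeffding to each of the $Dk$ entries and union-bounding forces a $\log(Dk/\delta)$ into the exponent, whereas $\Gamma(n,\delta)=\log(2/\delta)+\log(\max\{4e^2,n\beta_0\})$ contains no dependence on $D$ or $k$. The cited proof instead treats the operator (Frobenius-dominated) norm of the sum of independent mean-zero block matrices as a single scalar: it bounds its expectation by a second-moment computation, $\mathbb{E}\lVert\sum_j Z_j\rVert\le\big(\sum_j\mathbb{E}\lVert Z_j\rVert^2\big)^{1/2}$, which is where the single factor $\sqrt{Dk}\,LL'/(1-\lambda\gamma)$ enters, and then applies a bounded-differences deviation bound to that scalar. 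Your route is correct in spirit and would prove a bound of the same shape, but with an extra $\sqrt{\log(Dk)}$ factor; since you explicitly defer to the cited proof for the constants, this is a minor deviation rather than a fatal gap, but it is the point at which you would have to switch from entrywise concentration to norm-level concentration to recover the exact statement.
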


\begin{fact}
\label{LSTD_non_RP_appr_err}
{\rm(\cite{tsitsiklis1997analysis}, Theorem 1)}
The LSTD($\lambda$) approximation error satisfies
\begin{small}
\begin{equation}
\lVert V-V_{LSTD(\lambda)}\rVert_\mu \leq \frac{1-\lambda\gamma}{1-\gamma}\lVert V-\Pi_{\mathcal{F}}V\rVert_\mu.
\end{equation}
\end{small}
\end{fact}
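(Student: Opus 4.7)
The plan is to mimic the contraction-plus-triangle-inequality argument used for $\Pi_{\mathcal{G}}$ in Lemma \ref{pro:contraction}, but applied to $\mathcal{F}$. Writing $\alpha := \gamma(1-\lambda)/(1-\gamma\lambda)$, the goal reduces to showing $\lVert V-V_{LSTD(\lambda)}\rVert_\mu \le \frac{1}{1-\alpha}\lVert V-\Pi_{\mathcal{F}} V\rVert_\mu$, because a direct computation gives $1/(1-\alpha) = (1-\gamma\lambda)/(1-\gamma)$, which is exactly the claimed coefficient.

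First I would verify that $\Pi_{\mathcal{F}} T^\lambda$ is an $\alpha$-contraction with respect to $\lVert \cdot \rVert_\mu$. Orthogonal projection in the $\mu$-inner product is non-expansive, so it suffices to show $T^\lambda$ itself is an $\alpha$-contraction. For this, one first checks that $T$ is a $\gamma$-contraction in $\lVert \cdot \rVert_\mu$ (the standard argument, using $\mu$ being stationary under $P$ combined with Jensen's inequality to bound $\lVert Pu\rVert_\mu \le \lVert u\rVert_\mu$), hence by induction $T^i$ is a $\gamma^i$-contraction. Using the geometric-series representation $T^\lambda = (1-\lambda)\sum_{i=0}^\infty \lambda^i T^{i+1}$ and the triangle inequality:
\begin{equation*}
\lVert T^\lambda u - T^\lambda v\rVert_\mu \le (1-\lambda)\sum_{i=0}^\infty \lambda^i \gamma^{i+1} \lVert u-v\rVert_\mu = \frac{\gamma(1-\lambda)}{1-\lambda\gamma}\lVert u-v\rVert_\mu.
\end{equation*}

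Next I would use the two fixed-point identities $V = T^\lambda V$ (since $TV = V$ implies $T^i V = V$ for all $i$, whence $T^\lambda V = V$) and $V_{LSTD(\lambda)} = \Pi_{\mathcal{F}} T^\lambda V_{LSTD(\lambda)}$. The triangle inequality gives
\begin{equation*}
\lVert V - V_{LSTD(\lambda)}\rVert_\mu \le \lVert V - \Pi_{\mathcal{F}} V\rVert_\mu + \lVert \Pi_{\mathcal{F}} V - V_{LSTD(\lambda)}\rVert_\mu,
\end{equation*}
and substituting $\Pi_{\mathcal{F}} V = \Pi_{\mathcal{F}} T^\lambda V$ and $V_{LSTD(\lambda)} = \Pi_{\mathcal{F}} T^\lambda V_{LSTD(\lambda)}$, then applying the contraction bound from the previous step to the second term, yields
\begin{equation*}
\lVert V - V_{LSTD(\lambda)}\rVert_\mu \le \lVert V - \Pi_{\mathcal{F}} V\rVert_\mu + \alpha\,\lVert V - V_{LSTD(\lambda)}\rVert_\mu.
\end{equation*}
Rearranging and simplifying $(1-\alpha)^{-1} = (1-\gamma\lambda)/(1-\gamma)$ finishes the proof.

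The main obstacle is really just the contraction-factor computation for $T^\lambda$ in the $\mu$-weighted norm; once the $\gamma$-contraction of $T$ is established via stationarity of $\mu$, everything else is algebraic manipulation already foreshadowed by Lemma \ref{pro:contraction} (the same proof applied to $\mathcal{F}$ rather than $\mathcal{G}$). To recover the sharper constant $(1-\gamma\lambda)/\sqrt{(1-\gamma)(1+\gamma-2\lambda\gamma)}$ flagged in Remark 2, one would simply replace the triangle step above with the Pythagorean identity $\lVert V-V_{LSTD(\lambda)}\rVert_\mu^2 = \lVert V-\Pi_{\mathcal{F}} V\rVert_\mu^2 + \lVert \Pi_{\mathcal{F}} V - V_{LSTD(\lambda)}\rVert_\mu^2$, valid because $\Pi_{\mathcal{F}} V - V_{LSTD(\lambda)} \in \mathcal{F}$ is $\mu$-orthogonal to the residual $V - \Pi_{\mathcal{F}} V$.
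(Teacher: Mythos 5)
Your proof is correct and is essentially the standard argument from the cited source (Tsitsiklis and Van Roy, 1997); the paper itself offers no proof of this Fact, stating it only as a citation, and the closest in-paper material is Lemma~\ref{pro:contraction}, whose contraction factor $\gamma(1-\lambda)/(1-\gamma\lambda)$ you reuse exactly as intended. The fixed-point identities $V=T^{\lambda}V$ and $V_{LSTD(\lambda)}=\Pi_{\mathcal{F}}T^{\lambda}V_{LSTD(\lambda)}$, the triangle inequality, and the algebra $(1-\alpha)^{-1}=(1-\gamma\lambda)/(1-\gamma)$ all check out, and your closing observation that replacing the triangle inequality with the Pythagorean identity yields the sharper constant $(1-\gamma\lambda)/\sqrt{(1-\gamma)(1+\gamma-2\lambda\gamma)}$ of Remark~2 is also correct, since $1-\alpha^{2}=(1-\gamma)(1+\gamma-2\gamma\lambda)/(1-\gamma\lambda)^{2}$.
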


\section{Proof of Lemma 1}

\begin{proof}
Under Assumption 1, since $\Psi=\Phi H^T$,
to prove that $(\psi_j)_{j\in\{1,...,d\}}$ are linearly independent a.e., that is,
\begin{small}
\begin{equation*}
\mathbb{P}\{\Psi x_{d\times 1}=(\psi_1,...,\psi_d)x_{d\times 1}=0 \Rightarrow x_{d\times 1}=0\}=1.
\end{equation*}
\end{small}
we only need to show that
\begin{small}
\begin{equation*}
\mathbb{P}\{H^Tx_{d\times 1}=0 \Rightarrow x_{d\times 1}=0\}=1 \ \text{holds.}
\end{equation*}
\end{small}
Now decompose the random projection matrix $H\in\mathbb{R}^{d\times D}$ into two blocks as
\begin{small}
$H=
\begin{bmatrix}
H_1, H_2
\end{bmatrix},$
\end{small}
where $H_1 \in \mathbb{R}^{d\times d}$ and $H_2 \in \mathbb{R}^{d\times (D-d)}$. Since each element of $H_1$ are continuous variable, by mathematical induction, we can show that the determinant of matrix $H_1$ $|H_1|\neq 0$ a.e., which implies that
\begin{small}
$\mathbb{P}\{H_1^Tx_{d\times 1}=0 \Rightarrow x_{d\times 1}=0\}=1.$
\end{small}
Therefore, we have
\begin{small}
\begin{equation*}
\begin{aligned}
&\mathbb{P}\{H^Tx_{d\times 1}=0 \Rightarrow x_{d\times 1}=0\}\\
=&\mathbb{P}\{\begin{bmatrix}H_1^Tx_{d\times 1}\\ H_2^Tx_{d\times 1}\end{bmatrix}=\begin{bmatrix}0\\0 \end{bmatrix} \Rightarrow x_{d\times 1}=0\}\\
\geq &\mathbb{P}\{H_1^Tx_{d\times 1}=0 \Rightarrow x_{d\times 1}=0\}=1.
\end{aligned}
\end{equation*}
\end{small}
Furthermore, we have
\begin{small}
\begin{equation*}
\begin{aligned}
&\mathbb{P}\{\Psi^T D_\mu \Psi x_{d\times1}=0 \Rightarrow x_{d\times1}=0\}\\
\geq &\mathbb{P}\{x_{d\times1}^T\Psi^T D_\mu \Psi x_{d\times1}=0 \Rightarrow x_{d\times1}=0\}\\
\geq & \mathbb{P}\{\Psi x_{d\times1} =0\Rightarrow x_{d\times1}=0\}=1.
\end{aligned}
\end{equation*}
\end{small}
Therefore, $\Psi^TD_\mu\Psi$ is invertible (a.e.).
\end{proof}

\section{Proof of Lemma 2}

\begin{proof}
Using the Pythagorean theorem, for any measurable function $f\in\mathbb{R}^D$, we have
\begin{small}
\begin{equation*}
\lVert f \rVert _\mu = \lVert \Pi_{\mathcal{G}}f \rVert _\mu+\lVert f -\Pi_{\mathcal{G}}f \rVert _\mu \geq \lVert \Pi_{\mathcal{G}}f\rVert _\mu.
\end{equation*}
\end{small}
Hence, the operator $\Pi_{\mathcal{G}}$ is not an expansion w.r.t. $\mu$-weighted quadratic norm.

Furthermore, from \cite{tsitsiklis1997analysis}, we know that the multi-step Bellman operator
$T^\lambda$ is a $\frac{\gamma(1-\lambda)}{1-\gamma\lambda}$-contraction, i.e.,
\begin{small}
\begin{equation*}
\lVert T^\lambda f_1-T^\lambda f_2 \lVert_\mu \leq \frac{\gamma(1-\lambda)}{1-\gamma\lambda}\lVert  f_1- f_2 \lVert_\mu, \forall f_1, f_2 \in \mathbb{R}^D.
\end{equation*}
\end{small}

Therefore, $\Pi_{\mathcal{G}}T^\lambda$ is also a $\frac{\gamma(1-\lambda)}{1-\gamma\lambda}$-contraction.
\end{proof}

\section{Proof of Theorem 3}

\begin{proof}
For simplicity, denote $\epsilon_A=\hat{A}-A$, and let $\rho(A)$ be the spectral radius of the matrix $A$.

Under Assumption 1, we know that $A$ is invertible by Lemma 1. Consequently, $\hat{A}$ is invertible if and only if $\hat{A}A^{-1}=I+\epsilon_AA^{-1}$ is invertible. According to the relationship between the spectral radius of one matrix and its norm, we can obtain that if $\rho(\epsilon_AA^{-1}) <1$, then it implies that  $\hat{A}A^{-1}=I+\epsilon_AA^{-1}$ is invertible.

From the definition and properties of the matrix norm, we have
\begin{small}
\begin{equation}
\rho(\epsilon_AA^{-1})\leq \lVert \epsilon_AA^{-1} \rVert_2 \leq \lVert\epsilon_A\rVert_2\lVert A^{-1}\rVert_2.
\end{equation}
\end{small}
Therefore, in order to derive the sufficient conditions that $\hat{A}$ is invertible, we just only to need to find the sufficient conditions such that $\lVert\epsilon_A\rVert_2\lVert A^{-1}\rVert_2<1$.
In the following, we would bound $\lVert A^{-1}\rVert_2$ and $\lVert\epsilon_A\rVert_2$ respectively.

\noindent\emph{\textbf{Step 1:} Bound $\lVert A^{-1}\rVert_2$.}

By simple derivations, 
we have
\begin{small}
\begin{equation}
\label{eqn:psiA_inv_decom}
\Psi A^{-1}=(I-\Pi_{\mathcal{G}}M)^{-1}\Psi G^{-1},
\end{equation}
\end{small}
where $M=(1-\lambda)\gamma P(I-\lambda\gamma P)^{-1}$ and $G=\Psi^TD_\mu\Psi$. Furthermore, $(I-\Pi_{\mathcal{G}}M)^{-1}$ is well defined, since
$\lVert \Pi_{\mathcal{G}}M\rVert_\mu
\leq\lVert M\rVert_\mu \leq \frac{(1-\lambda)\gamma}{1-\lambda\gamma}<1$ according to the contraction property of $\Pi_{\mathcal{G}}$ (Lemma 2) and $\lVert P\rVert_\mu =1$ \cite{tsitsiklis1997analysis}.
Besides, by the triangle inequality of the matrix norm, we have
\begin{small}
\begin{equation*}
\label{eqn:Iminusnorm}
\begin{aligned}
&\lVert (I-\Pi_{\mathcal{G}}M)^{-1}\rVert_\mu\leq \lVert \sum_{i=0}^\infty (\Pi_{\mathcal{G}}M)^i\rVert_\mu
\leq \sum_{i=0}^{\infty}\lVert \Pi_{\mathcal{G}}M\rVert_\mu^i
\leq\frac{1-\lambda\gamma}{1-\gamma}.
\end{aligned}
\end{equation*}
\end{small}
Given the random projection $H$, i.e., given $G$, on one hand, for any $g\in\mathbb{R}^d$, we have
\begin{small}
\begin{equation}
\label{eqn:inverse_A_bound_1}
\begin{aligned}
\lVert \Psi A^{-1}g\rVert_\mu\leq \lVert (I-\Pi_{\mathcal{G}}M)^{-1}\rVert_\mu \lVert \Psi G^{-1}g\rVert_\mu
\leq \frac{(1-\lambda\gamma)}{(1-\gamma)\sqrt{\nu_G}}\lVert g\rVert_2.
\end{aligned}
\end{equation}
\end{small}
On the other hand, for any $g\in\mathbb{R}^d$, we have
\begin{small}
\begin{equation}
\label{eqn:inverse_A_bound_2}
\begin{aligned}
\lVert \Psi A^{-1}g\rVert_\mu= \sqrt{(A^{-1}g)^TG(A^{-1}g)}
\geq \sqrt{\nu_G}\lVert A^{-1}g\rVert_2.
\end{aligned}
\end{equation}
\end{small}
Combining Equations (\ref{eqn:inverse_A_bound_1})-(\ref{eqn:inverse_A_bound_2}), by the definition of the operator norm of one matrix, we obtain that
\begin{small}
\begin{equation*}
\lVert A^{-1}\rVert_2 =\sup_{g\neq 0}\frac{\lVert A^{-1}g\rVert_2}{\lVert g\rVert_2}\leq \frac{1-\lambda\gamma}{(1-\gamma)\nu_G}.
\end{equation*}
\end{small}
According to Fact \ref{lem:eigenvalue_g}, with probability at least $1-\delta/2$, we have
\begin{small}
\begin{equation}
\lVert A^{-1}\rVert_2  \leq \frac{1-\lambda\gamma}{(1-\gamma)\nu_0(
\delta/2)},
\end{equation}
\end{small}
\begin{small}
\begin{equation}
\label{eqn:1}
\text{where} \ \nu_0(\delta)=(D/d)\big(1-\sqrt{d/D}-\sqrt{2\log(2/\delta)/D}\big)^2\nu_F.
\end{equation}
\end{small}
\noindent\emph{\textbf{Step 2:} Bound $\lVert\epsilon_A\rVert_2$.}

We first bound $\lVert\mathbb{E}[\epsilon_A]\rVert_2$, and then we can leverage the concentration inequality to derive the upper bound of $\lVert\epsilon_A\rVert_2$.

By the expressions of $A $ and $\hat{A}$, we have
\begin{small}
\begin{equation*}
\begin{aligned}
&\lVert\mathbb{E}[\epsilon_A]\rVert_2\\
=& \lVert\mathbb{E}[\frac{1}{n-1}\sum_{i=1}^{n-1}\sum_{k=-\infty}^{0}(\lambda\gamma)^{i-k}\psi(X_k)(\psi(X_i)-\gamma\psi(X_{i+1}))^T]\rVert_2\\
\leq &\frac{1}{n-1}\frac{2DL^2}{(1-\lambda\gamma)^2}:=\epsilon_1(n).
\end{aligned}
\end{equation*}
\end{small}
For any $\delta' \in (0,1)$, set $\epsilon^2=\frac{8}{d}\log\frac{n}{\delta'}$. So for $d\geq 15\log\frac{n}{\delta'}$, we have $\epsilon < \frac{3}{4}$, and consequently we have
$\epsilon^2/4-\epsilon^3/6 \geq \epsilon^2/8, \ \text{and} \ \  d\geq \frac{1}{\epsilon^2/4-\epsilon^3/6}\log\frac{n}{\delta'}.$
According to Johnson-Lindenstrauss Lemma (Fact \ref{lem:JJL}), with probability at least $1-\delta'$, for all $i\in [1,n]$, we have
\begin{small}
\begin{equation*}
\label{eqn:norm_preservation}
\begin{aligned}
\lVert \psi(X_i)\rVert_2^2=&\lVert H\phi(X_i)\rVert_2^2\leq(1+\epsilon)\lVert \phi(X_i)\rVert_2^2\\
\leq& (1+\epsilon)DL^2
\leq(1+\sqrt{\frac{8}{d}\log\frac{n}{\delta'}})DL^2.
\end{aligned}
\end{equation*}
\end{small}
Define
\begin{small}
\begin{equation}
\label{eqn:2}
\begin{aligned}
\epsilon_1(n,\delta_n)=&\frac{2DLL'}{(1-\lambda\gamma)\sqrt{n-1}}\sqrt{(m_n^{\lambda}+1)J(n-1,\delta_n)}\\
+&\frac{2DLL'}{(n-1)(1-\lambda\gamma)}m_n^\lambda +\epsilon_1(n),
\end{aligned}
\end{equation}
\end{small}
where
$L'=2(1+\sqrt{\frac{8}{d}\log\frac{n}{\delta'}})L,
\ J(n,\delta)= 32\Gamma(n,\delta)\max\{\frac{\Gamma(n,\delta)}{\beta_1},1\}^{\frac{1}{\kappa}},  \ \Gamma(n,\delta)=\log(\frac{2}{\delta})+\log(\max\{4e^2,n\beta_0\}).$

From Fact 5, we know that on the event
\begin{small}
\begin{equation*}
\mathcal{E}_1=\{\lVert \psi(X_i)\rVert_2^2 \leq (1+\sqrt{\frac{8}{d}\log\frac{n}{\delta'}})DL^2, i \in [1,n]\},
\end{equation*}
\end{small}
with probability at least $1-\delta_n$, $\lVert \epsilon_A\rVert_2 \leq \epsilon_1(n,\delta_n)$ holds.

Set $\mathcal{E}_2:=\cup_{n=1}\{\lVert \epsilon_A\rVert_2 \geq \epsilon_1(n,\delta_n)\}$. By the law of total probability, we deduce that
\begin{small}
\begin{equation*}
\begin{aligned}
\mathbb{P}\{\mathcal{E}_2\}
=&\mathbb{P}\{\mathcal{E}_1, \mathcal{E}_2\}
+\mathbb{P}\{\mathcal{E}_1^c,\mathcal{E}_2\}
\leq \mathbb{P}\{\mathcal{E}_2 \vert \mathcal{E}_1\}
+\mathbb{P}\{\mathcal{E}_1^c\}\\
\leq &\sum_{n=1}^{\infty}\mathbb{P}\{\lVert \epsilon_A\rVert_2 \geq \epsilon_1(n,\delta_n)\}\vert \mathcal{E}_1\}
+\mathbb{P}\{\mathcal{E}_1^c\}
\leq (\sum_{n=1}^{\infty}\delta_n)+\delta',
\end{aligned}
\end{equation*}
\end{small}
where $\mathcal{E}_1^c$ is the complement of $\mathcal{E}_1$.

Take $\delta'=\delta/4, \delta_n=\delta/(8n^2)$, we have
\begin{small}
\begin{equation}
\mathbb{P}\{\mathcal{E}_2\}\leq (\sum\nolimits_{n=1}^{\infty}\delta/8n^2)+\delta/4 \leq 8\delta\cdot\pi^2/6+\delta/4\leq \delta/2.
\end{equation}
\end{small}
Therefore, with probability at least $1-\delta/2$, we have
\begin{small}
\begin{equation}
\lVert\epsilon_A\rVert_2 \leq \epsilon_1(n,\frac{\delta}{8n^2}).
\end{equation}
\end{small}
\noindent\emph{\textbf{Step 3:} Derive the sufficient conditions such that $\rho(\epsilon_AA^{-1})<1$.}

Combining the above derivations of step 1 and step 2, we get with probability at least $1-\delta$, the following inequality holds
\begin{small}
\begin{equation}
\label{eqn:3}
\begin{aligned}
\rho(\epsilon_AA^{-1})\leq &\lVert\epsilon_A\rVert_2\lVert A^{-1}\rVert_2
\leq \frac{1-\lambda\gamma}{(1-\gamma)\nu_0(
\delta/2)}\epsilon_1(n,\frac{\delta}{8n^2})<1.
\end{aligned}
\end{equation}
\end{small}
Now we substitute the expressions of $\nu_0(
\delta/2)$ (Equation (\ref{eqn:1})) and $\epsilon_1(n,\frac{\delta}{8n^2})$ (Equation (\ref{eqn:2})) into Equation (\ref{eqn:3}), then we complete the proof.
\end{proof}

\section{Proof of Theorem 4}
Our proof consists of three main steps: First, fixed the low-dimensional space $\mathcal{G}$ which is generated through random projections from the original high-dimensional feature space $\mathcal{F}$, we bound this estimation error based on the results of LSTD($\lambda$) \cite{tagorti2015rate}. Consequently, the bound we obtain depends on the norm of the feature vector of $\mathcal{G}$ and the smallest eigenvalue of the randomly projected Gram matrix $\Psi^TD_\mu\Psi$ which both need to be determined. Then, by utilizing Johnson-Lindenstrauss Lemma (Fact \ref{lem:JJL}), the inner-product preservation property of random projections (Fact \ref{lem:random_linear_norm_preserve}), the relationship between the smallest eigenvalues of the Gram matrices in spaces $\mathcal{F}$ and $\mathcal{G}$ (Fact \ref{lem:eigenvalue_g}) and the concentration inequality for $\beta$-mixing processes,, we bound these two items respectively. Finally, we
summarize all the derivations and get the result.
\begin{proof}
\emph{\textbf{Step 1:} Given $\mathcal{G}$, upper bound the estimation error.}

For any fixed random projected subspace $\mathcal{G}$, according to Theorem 1 in \cite{tagorti2015rate}, for any $\delta_1 >0$, with probability at least $1-\delta_1$ (w.r.t. the random sample), for all $n\geq n_0(\delta_1),$ we have
\begin{small}
\begin{equation}
\label{eqn:given_G_bound}
\begin{aligned}
&\lVert V_{\text{LSTD}(\lambda)\text{-RP}} -\hat{V}_{\text{LSTD}(\lambda)\text{-RP}} \rVert_\mu \\
\leq &\frac{4V_{\max}\max_{1\leq i\leq n}\lVert \psi(X_i)\rVert_2^2}{\sqrt{n-1}(1-\gamma)\nu_{G}}\sqrt{(m_n^\lambda+1)I(n-1,\delta_1)}+h(n,d,\delta_1),
\end{aligned}
\end{equation}
\end{small}
where $\nu_G$ is the smallest eigenvalue of the Gram matrix $G=\Psi^TD_\mu\Psi$, $h(n,d,\delta_1)=\tilde{O}(\frac{d}{n}\log\frac{1}{\delta_1})$. For simplicity, denote the r.h.s. of Equation (\ref{eqn:given_G_bound}) as $EstErr_G(\delta_1)$.


Therefore, we only need to bound the two items \begin{small}$\max\limits_{1\leq i\leq n}\lVert \psi(X_i)\rVert_2^2$\end{small} and $\nu_G$ of $EstErr_G(\delta_1)$ in Equation (\ref{eqn:given_G_bound}).\\
\noindent\emph{\textbf{Step 2:} Bound the two items $\max\limits_{1\leq i\leq n}\lVert \psi(X_i)\rVert_2^2$ and $\nu_G$.}

For any $\delta_2 \in (0,1)$, set $\epsilon^2=\frac{8}{d}\log\frac{n}{\delta_2}$. So for $d\geq 15\log\frac{n}{\delta_2}$, we have $\epsilon < \frac{3}{4}$, and consequently we have
$\epsilon^2/4-\epsilon^3/6 \geq \epsilon^2/8$, and $d\geq \frac{1}{\epsilon^2/4-\epsilon^3/6}\log\frac{n}{\delta_2}.$
According to Johnson-Lindenstrauss Lemma (Fact \ref{lem:JJL}), with probability at least $1-\delta_2$, for all $i\in [1,n]$, we have
\begin{small}
\begin{equation*}
\label{eqn:norm_preservation_copy}
\begin{aligned}
\lVert \psi(X_i)\rVert_2^2
\leq(1+\epsilon)\lVert \phi(X_i)\rVert_2^2
\leq(1+\sqrt{(8/d)\log(n/\delta_2)})DL^2.
\end{aligned}
\end{equation*}
\end{small}
Therefore, with probability at least $1-\delta_2$,
\begin{small}
\begin{equation}
\label{eqn:norm_preservation_1}
\max_{1\leq i \leq n}\lVert \psi(X_i)\rVert_2^2 \leq (1+\sqrt{(8/d)\log(n/\delta_2)})DL^2.
\end{equation}
\end{small}
What's more, for any $\delta_3 \in (0,1)$, from Fact \ref{lem:eigenvalue_g}, when $D>d+2\sqrt{2d\log(2/\delta_3)}+2\log(2/\delta_3)$, with probability at least $1-\delta_3$, we have
\begin{small}
\begin{equation}
\label{eqn:egivalue}
\nu_G \geq \nu_0(\delta_3):=(D/d)\nu_F\big(1-\sqrt{d/D}-\sqrt{2\log(2/\delta_3)/D}\big)^2>0.
\end{equation}
\end{small}
\noindent\emph{\textbf{Step 3:} Bound the estimation error \begin{small}$\lVert V_{\text{LSTD}(\lambda)\text{-RP}} -\hat{V}_{\text{LSTD}(\lambda)\text{-RP}} \rVert_\mu$\end{small}.}

In this step, we bridge the estimation error between the fixed random projection space $\mathcal{G}$ and any arbitrary random projection space by the conditional expectation properties. Combining Equations (\ref{eqn:given_G_bound})-(\ref{eqn:egivalue}), unconditioning, we have
\begin{small}
\begin{equation}
\begin{aligned}
&\mathbb{P}\big\{\lVert V_{\text{LSTD}(\lambda)\text{-RP}} -\hat{V}_{\text{LSTD}(\lambda)\text{-RP}} \rVert_\mu
\leq EstErr_G(\delta_1) \big\}\\
\geq &\mathbb{P}\big\{\lVert V_{\text{LSTD}(\lambda)\text{-RP}} -\hat{V}_{\text{LSTD}(\lambda)\text{-RP}} \rVert_\mu \leq EstErr_G(\delta_1),
(\max_{1\leq i\leq n}\lVert \psi(X_i)\rVert_2^2)/\nu_{G}\leq DL^2\xi(n,d,\delta_2)/\nu_0(\delta_3)\big\}\\
=&\mathbb{E}\big[\mathbb{I}\{(\max_{1\leq i\leq n}\lVert \psi(X_i)\rVert_2^2)/\nu_{G}\leq DL^2\xi(n,d,\delta_2)/\nu_0(\delta_3)\}\times\\
&\mathbb{P}\{\lVert V_{\text{LSTD}(\lambda)\text{-RP}} -\hat{V}_{\text{LSTD}(\lambda)\text{-RP}} \rVert_\mu
 \leq EstErr_G(\delta_1) \big\vert H\}\big]\\
\geq &(1-\delta_1)\mathbb{P}\{(\max_{1\leq i\leq n}\lVert \psi(X_i)\rVert_2^2)/\nu_{G}\leq DL^2\xi(n,d,\delta_2)/\nu_0(\delta_3)\}\\
\geq & (1-\delta_1)(1-\delta_2-\delta_3)\geq 1-\delta_1-\delta_2-\delta_3.
\end{aligned}
\end{equation}
\end{small}
Setting $\delta_1=\delta_2=\delta_3=\delta/3$, then we complete the proof.
\end{proof}

\section{Proof of Theorem 5}
For the proof brevity of Theorem 5, we first present the following Lemma \ref{Prop:chernoff_hoeffing_mixing} which is important during the proof process of Theorem 5. To prove Lemma \ref{Prop:chernoff_hoeffing_mixing}, we first make full use of \emph{independent block technique} \cite{yu1994rates} in order to transform the original problem based on dependent samples to that based on independent blocks. Then, we apply the symmetrization technique and Hoeffding inequality to obtain the desired bound.
\begin{lemma}
\label{Prop:chernoff_hoeffing_mixing}
Let $\{X_t\}_{t=1}^n$ be samples drawn from a stationary exponential $\beta$-mixing process with coefficients satisfy $\beta(m)\leq \beta_0\exp(-\beta_1m^\kappa)$, $\beta_0, \beta_1, \kappa>0$. Let $h \in \mathcal{B}(\mathcal{X},M_h)$). Then for any $\delta \in (0,1)$, with probability at least $1-\delta$, we have\\
\begin{small}
\begin{equation*}
\lvert\frac{1}{n}\sum\nolimits_{t=1}^nh(X_t)-\mathbb{E}h(X_t)\rvert\leq \frac{2M_h}{\sqrt{n}}\sqrt{\Upsilon(n,\delta)},
\end{equation*}
\end{small}
where $\Upsilon(n,\delta)=[\log\frac{1}{\delta}+\log(4+n\beta_0)]\big[\frac{\log\frac{1}{\delta}+\log(4+n\beta_0)}{\beta_1}\big]^{\frac{1}{\kappa}}$.
\end{lemma}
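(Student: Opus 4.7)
\textbf{Proof proposal for Lemma \ref{Prop:chernoff_hoeffing_mixing}.}

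The plan is to follow the classical independent-block technique of Yu (1994), which underlies Fact \ref{independent_block}. The idea is to convert the sum of dependent samples into a sum over blocks that, under a carefully chosen coupling, behave as if they were independent, so that Hoeffding's inequality can be applied. First, I would partition the index set $\{1,\dots,n\}$ into $2\mu$ consecutive blocks of length $a$ (up to an integer rounding issue which contributes only an $O(a M_h / n)$ term that is absorbed into the final constant). Label the odd-indexed blocks $H_1^{(o)},\dots,H_\mu^{(o)}$ and the even-indexed blocks $H_1^{(e)},\dots,H_\mu^{(e)}$. Set $B_j^{(o)}=\sum_{t\in H_j^{(o)}} h(X_t)$ and similarly for $B_j^{(e)}$. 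Then $\sum_{t=1}^n h(X_t) = \sum_j B_j^{(o)} + \sum_j B_j^{(e)}$, so by the triangle inequality it suffices to bound each of the two partial sums separately by half of the target.

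Next, I would apply Fact \ref{independent_block} to the block sequence $(B_j^{(o)})_{j=1}^\mu$: there exists a coupled independent sequence $(\tilde B_j^{(o)})_{j=1}^\mu$ whose marginals agree with those of the original blocks, and for any event $\mathcal{E}$ on the block space one has $|\mathbb{P}(\mathcal{E}) - \mathbb{P}(\tilde{\mathcal{E}})| \leq (\mu-1)\beta(a) \leq \mu\beta_0\exp(-\beta_1 a^\kappa)$. On the independent surrogate, each $\tilde B_j^{(o)}/a \in [-M_h, M_h]$, so Hoeffding's inequality for bounded independent variables yields
\begin{equation*}
\mathbb{P}\!\left\{\Big|\tfrac{1}{\mu a}\sum_{j=1}^\mu \tilde B_j^{(o)} - \mathbb{E}h(X_1)\Big| > \epsilon\right\} \leq 2\exp\!\big(-\mu\epsilon^2/(2M_h^2)\big).
\end{equation*}
Transferring this bound back to the original $\beta$-mixing sequence costs an additive $\mu\beta_0\exp(-\beta_1 a^\kappa)$ in probability, and the same argument applies to the even blocks.

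Finally I would tune $a$ and $\epsilon$. Choosing $a = \lceil (\beta_1^{-1}\log((4+n\beta_0)/\delta))^{1/\kappa}\rceil$ forces $\mu\beta_0\exp(-\beta_1 a^\kappa) \leq \delta/2$ (using $\mu \leq n/(2a) \leq n$), while choosing $\epsilon^2 = (2M_h^2/\mu)\log(4/\delta)$ controls the Hoeffding term at level $\delta/2$. Substituting $\mu = n/(2a)$ gives $\epsilon = O(M_h\sqrt{a\log(1/\delta)/n})$, and plugging in the value of $a$ produces $\epsilon = O\!\big(M_h n^{-1/2}\sqrt{\log(1/\delta)\cdot [\beta_1^{-1}\log((4+n\beta_0)/\delta)]^{1/\kappa}}\big)$, which matches the target $\frac{2M_h}{\sqrt n}\sqrt{\Upsilon(n,\delta)}$ after combining the two logarithmic factors and applying a union bound over odd and even halves (rescaling $\delta$ by a constant).

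The main obstacle is bookkeeping rather than any deep step: matching the exact constants and the precise form of $\Upsilon(n,\delta)$ requires carefully propagating the block length $a$ through both the Hoeffding tail and the mixing penalty, handling the integer rounding when $n\neq 2\mu a$ (the leftover at most $2a$ terms contribute a deterministic $O(aM_h/n)$ slack), and verifying that the choice of $a$ simultaneously validates $a\leq n/2$ so the block decomposition is well defined. Once these calibrations are done, the two-in-one splitting and Yu's coupling deliver the stated sub-Gaussian-type deviation bound.
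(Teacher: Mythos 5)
Your proposal is correct and follows essentially the same route as the paper's proof: partition into $2u_n$ alternating blocks of length $a_n$, replace the odd (resp.\ even) blocks by an independent copy via Yu's coupling (Fact~\ref{independent_block}) at cost $2u_n\beta(a_n)\le n\beta_0\exp(-\beta_1 a_n^{\kappa})$, apply Hoeffding to the surrogate independent block averages, and then tune $a_n$. The only (immaterial) difference is the order of calibration --- you fix $a$ from $\delta$ and then solve for $\epsilon$, whereas the paper chooses $a_n=\lceil(n\epsilon^2/(4\beta_1 M_h^2))^{1/(\kappa+1)}\rceil$ to balance the two exponential terms and then inverts the resulting tail bound in $\delta$; both yield the stated $\Upsilon(n,\delta)$ up to constants.
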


\begin{proof}
Denote $u_n=n/2a_n$\footnote{Without loss of generality, here we assume $n=2u_na_n$. If $n$ is an odd number, this lemma still holds.}, where $u_n\in \mathbb{N}^+, a_n\in\mathbb{N}^+$.
Divide $\{X_t\}_{t=1}^n$ into $2u_n$ blocks, each of which consists of $a_n$ consecutive samples. For $1 \leq j \leq u_n$, we define $H_j = \{t: 2(j-1)a_n + 1 \leq t \leq (2j-1)a_n\}$, and $T_j = \{t: (2j-1)a_n + 1 \leq t \leq (2j)a_n\}$. We introduce i.i.d. blocks $\{\tilde{X}_{t}: t \in H_j\}$ and each block has the same distribution with $\{X_{t}: t \in H_1\}$. Therefore, for any $\epsilon>0$
\begin{small}
\begin{equation}
\label{eqn:hoeffeding_mixing}
\begin{aligned}
&\mathbb{P}\{\lvert\frac{1}{n}\sum_{t=1}^nh(X_t)-\mathbb{E}h(X_t)\rvert\geq\epsilon\}\\
\leq &\mathbb{P}\{\lvert\sum_{j=1}^{u_n}\sum_{t\in H_j}h(X_t)-\mathbb{E}h(X_t)\rvert
+\lvert\sum_{j=1}^{u_n}\sum_{t\in T_j}h(X_t)-\mathbb{E}h(X_t)\rvert\geq n\epsilon/2\}\\
\leq &\mathbb{P}\{\lvert\sum_{j=1}^{u_n}\sum_{t\in H_j}h(X_t)-\mathbb{E}h(X_t)\rvert\geq n\epsilon/4\}
+\mathbb{P}\{\lvert\sum_{j=1}^{u_n}\sum_{t\in T_j}h(X_t)-\mathbb{E}h(X_t)\rvert\geq n\epsilon/4\}\\
=&2\mathbb{P}\{\lvert\sum_{j=1}^{u_n}\sum_{t\in H_j}h(X_t)-\mathbb{E}h(X_t)\rvert\geq n\epsilon/4\}\\
\leq &2\mathbb{P}\{\lvert\sum_{j=1}^{u_n}\sum_{t\in H_j}h(\tilde{X}_t)-\mathbb{E}h(\tilde{X}_t)\rvert\geq n\epsilon/4\}+2u_n\beta_{a_n}\\
\leq &4\exp(-\frac{u_n\epsilon^2}{2M_h^2})+2u_n\beta_{a_n}\\
\leq&4\exp(-\frac{n\epsilon^2}{4a_nM_h^2})+\frac{n}{a_n}\beta_0\exp(-\beta_1a_n^\kappa)\\
\leq &4\exp(-\frac{n\epsilon^2}{4a_nM_h^2})+n\beta_0\exp(-\beta_1a_n^\kappa).
\end{aligned}
\end{equation}
\end{small}
Take $a_n=\lceil (\frac{n\epsilon^2}{4\beta_1M_h^2})^{\frac{1}{\kappa+1}}\rceil$, Then Equation (\ref{eqn:hoeffeding_mixing}) becomes:
\begin{small}
\begin{equation}
\begin{aligned}
\mathbb{P}\{\lvert\frac{1}{n}\sum_{t=1}^nh(X_t)-\mathbb{E}h(X_t)\rvert\geq\epsilon\}
\leq (4+n\beta_0)\exp(-\beta_1(\frac{n\epsilon^2}{4\beta_1M_h^2})^{\frac{\kappa}{\kappa+1}}).
\end{aligned}
\end{equation}
\end{small}
Then if we set $\delta=(4+n\beta_0)\exp(-\beta_1(\frac{n\epsilon^2}{4\beta_1M_h^2})^{\frac{\kappa}{\kappa+1}})$, we can complete our proof.
\end{proof}

The proof of Theorem 5 is organized in three main steps: First, we analyze the approximation error on any fixed random projected space $\mathcal{G}$. Then, we make a bridge of approximation error bound between the fixed random projection space and any arbitrary random projection space by leveraging the inner-product preservation property of random projections (Fact \ref{lem:random_linear_norm_preserve}) and the Chernoff-Hoeffding inequality for stationary $\beta$-mixing sequence (Lemma \ref{Prop:chernoff_hoeffing_mixing}). Finally, we summarize all the derivations and eventually get the theorem.

\noindent\textbf{Proof of Theorem 5}\\
\noindent\emph{\textbf{Step 1:} Given $\mathcal{G}$, upper bound the approximation error.}

Fixed the random projected space $\mathcal{G}$, according to Theorem 1 in \cite{tsitsiklis1997analysis}, we have
\begin{small}
\begin{equation}
\label{eqn:app_err_Given_G}
\lVert V-V_{\text{LSTD}(\lambda)\text{-RP}}\rVert_\mu \leq (1-\lambda\gamma)/(1-\gamma)\lVert V-\Pi_{\mathcal{G}}V\rVert_\mu.
\end{equation}
\end{small}
By using the triangle inequality of the norm, we have
\begin{small}
\begin{equation}
\label{eqn:tri_v}
\lVert V-\Pi_{\mathcal{G}}V\rVert_\mu \leq \lVert V-\Pi_\mathcal{F}V\rVert_\mu + \lVert \Pi_\mathcal{F}V-\Pi_{\mathcal{G}}V\rVert_\mu.
\end{equation}
\end{small}
Since $\Pi_\mathcal{F}V$ is the orthogonal projection of $V$ on the high-dimensional space $\mathcal{F}$, for any $g \in \mathcal{G} \subseteq \mathcal{F}$, using the Pythagorean theorem, we have
\begin{small}
\begin{equation*}
\lVert V-g\rVert_\mu^2 = \lVert V-\Pi_\mathcal{F}V\rVert_\mu^2 + \lVert \Pi_\mathcal{F}V-g\rVert _\mu^2.
\end{equation*}
\end{small}
Therefore,
\begin{small}
\begin{equation*}
\arg\inf_{g\in\mathcal{G}}\lVert V-g\rVert_\mu = \arg\inf_{g\in\mathcal{G}}\lVert \Pi_\mathcal{F}V-g\rVert _\mu.
\end{equation*}
\end{small}
According to the definition of the orthogonal projection, we obtain that
\begin{small}
\begin{equation}
\label{eqn:proj}
\Pi_\mathcal{G}V=\Pi_\mathcal{G}(\Pi_\mathcal{F}V).
\end{equation}
\end{small}
Combine Equations (\ref{eqn:app_err_Given_G})-(\ref{eqn:proj}), we obtain that
\begin{small}
\begin{equation}
\label{eqn:v_decom}
\begin{aligned}
&\lVert V-V_{LSTD(\lambda)-RP}\rVert_\mu\\
\leq & (1-\lambda\gamma)/(1-\gamma)[\lVert V-\Pi_\mathcal{F}V\rVert_\mu + \lVert \Pi_\mathcal{F}V-\Pi_{\mathcal{G}}(\Pi_\mathcal{F}V)\rVert_\mu].
\end{aligned}
\end{equation}
\end{small}
Therefore, next we only need to bound $\lVert \Pi_\mathcal{F}V-\Pi_{\mathcal{G}}(\Pi_\mathcal{F}V)\rVert_\mu$ in the r.h.s. of Equation (\ref{eqn:v_decom}).\\
\noindent\emph{\textbf{Step 2:} Bound $\lVert \Pi_\mathcal{F}V-\Pi_{\mathcal{G}}(\Pi_\mathcal{F}V)\rVert_\mu$.}

For ease the reference, let $Z(x)=\lvert \alpha \cdot\phi(x)-H\alpha\cdot H\phi(x)\rvert$.
For any $\delta>0$, take $\epsilon=\sqrt{(8/d)\log(8n/\delta)}$. For $d\geq 15\log(8n/\delta)$, we have $\epsilon\leq 3/4$ and accordingly,
\begin{small}
$\epsilon^2/4-\epsilon^3/6 \geq \epsilon^2/8, \ \text{and} \ \  d\geq \frac{\log(8n/\delta)}{\epsilon^2/4-\epsilon^3/6}.$
\end{small}
According to Fact \ref{lem:random_linear_norm_preserve}, we have with probability at least $1-\delta/2$, for all $1\leq i\leq n$, the following inequality holds,
\begin{small}
\begin{equation*}
Z(X_i)\leq \epsilon\lVert \alpha\rVert_2\lVert \phi(X_i)\rVert_2\leq \epsilon m(f_\alpha).
\end{equation*}
\end{small}
Denote
\begin{small}
$\epsilon_0=(2\epsilon m(f_\alpha)/\sqrt{n})\sqrt{\Upsilon(n,\delta/2)}.$
\end{small}
According to Chernoff-Heoffding's inequality for stationary exponential $\beta$-mixing sequences (see Lemma \ref{Prop:chernoff_hoeffing_mixing}), we have
\begin{small}
\begin{displaymath}
\begin{aligned}
&\mathbb{P}\{\lvert\mathbb{E} Z(X) - (1/n)\sum\nolimits_{i=1}^n Z(X_i)\rvert \geq \epsilon_0\}\\
\leq & \mathbb{P}\{\lvert\mathbb{E} Z(X) - (1/n)\sum\nolimits_{i=1}^n Z(X_i)\rvert \geq \epsilon_0, Z(X_i)\leq \epsilon m(f_\alpha), \forall i\in [1,n]\}\\
+&\mathbb{P}\{Z(X_i)\geq \epsilon m(f_\alpha), \exists i\in [1,n]\}\\
\leq &\mathbb{P}\big\{\mathbb{P}\{\lvert\mathbb{E} Z(X) - (1/n)\sum\nolimits_{i=1}^n Z(X_i)\rvert \geq \epsilon_0 \big\vert H, Z(X_i)\leq \epsilon m(f_\alpha),\forall i\in [1,n]\}\big\}
+\delta/2\\
\leq &\delta.
\end{aligned}
\end{displaymath}
\end{small}
Therefore, with probability (w.r.t. the random sample and random projection) at least $1-\delta$, we have
\begin{small}
\begin{displaymath}
\begin{aligned}
\lVert\alpha \cdot\phi(x)-H\alpha\cdot H\phi(x)\rVert_\mu =& \mathbb{E}[Z(X)]
\leq \frac{1}{n}\sum\nolimits_{i=1}^n Z(X_i)+\epsilon_0 \leq \epsilon m(f_\alpha)+\epsilon_0
\end{aligned}
\end{displaymath}
\end{small}
As a result, with probability (w.r.t. the random sample and random projection) at least $1-\delta$, we have
\begin{small}
\begin{equation}
\label{eqn:bound_F_projection}
\begin{aligned}
&\lVert \Pi_\mathcal{F}V-\Pi_{\mathcal{G}}(\Pi_\mathcal{F}V)\rVert_\mu \\
\leq &\sqrt{(8/d)\log(8n/\delta)} m(\Pi_{\mathcal{F}}V)[1+(2/\sqrt{n})\sqrt{\Upsilon(n,\delta/2)}].
\end{aligned}
\end{equation}
\end{small}
\noindent\emph{\textbf{Step 3:} Bound the approximation error $\lVert V-V_{\text{LSTD}(\lambda)\text{-RP}}\rVert_\mu$.}

In this step, we make a bridge of the approximation error between the fixed random projection space $\mathcal{G}$ and arbitrary random projection space through the conditional expectation properties. Combining Equations (\ref{eqn:v_decom})-(\ref{eqn:bound_F_projection}), we get the theorem.$\hfill\square$


\bibliographystyle{plain}
\bibliography{ijcai18}

\end{document}